\newcommand{\norm}[1]{\left\lVert#1\right\rVert}
\newtheorem{theorem}{Theorem}
\newtheorem{lemma}[theorem]{Lemma}
\newcommand{\BibTeX}{B\kern-.05em{\sc i\kern-.025em b}\kern-.08em\TeX}
\begin{document}


\begin{frontmatter}


\paperid{4315} 


\title{Gumbel-MPNN: Graph Rewiring with Gumbel-Softmax}


\author[A]{\fnms{Marcel}~\snm{Hoffmann}\orcid{0000-0001-8061-9396}\thanks{Corresponding Author. Email: marcel.hoffmann@uni-ulm.de}}
\author[B]{\fnms{Lukas}~\snm{Galke}\orcid{0000-0001-6124-1092}}
\author[A]{\fnms{Ansgar}~\snm{Scherp}\orcid{0000-0002-2653-9245}} 

\address[A]{Ulm University}
\address[B]{University of Southern Denmark}


\begin{abstract}
Graph homophily has been considered an essential property for message-passing neural networks (MPNN) in node classification.
Recent findings suggest that performance is more closely tied to the consistency of neighborhood class distributions.
We demonstrate that the MPNN performance depends on the number of components of the overall neighborhood distribution within a class.
By breaking down the classes into their neighborhood distribution components, we increase measures of neighborhood distribution informativeness but do not observe an improvement in MPNN performance.
We propose a Gumbel-Softmax-based rewiring method that reduces deviations in neighborhood distributions.
Our results show that our new method enhances neighborhood informativeness, handles long-range dependencies, mitigates oversquashing, and increases the classification performance of the MPNN.
The code is available at \url{https://github.com/Bobowner/Gumbel-Softmax-MPNN}.

\end{abstract}

\end{frontmatter}


\section{Introduction}

Graph Neural Networks (GNNs)~\cite{org_gnn}, particularly Message Passing Neural Networks~\cite{GCN, GAT, GraphSage} (MPNNs), are state-of-the-art in node classification tasks~\cite{strong_baselines, HomophilyNecessity}.
Established MPNNs are designed under the homophily assumption, \ie the idea that connected nodes share similar properties and belong to the same class.
For example, in citation graphs, papers frequently cite within the same field, or in social networks, similar people tend to form connections~\cite{H2GNN}.

MPNNs have been believed to perform well on homophilic graphs but poorly on heterophilic graphs, such as financial transaction networks, where fraudsters are more likely to connect with regular users than with one another fraudster~\cite{CPGNN, BlockModellingGCN, HOG-GNN}.
Recent studies have challenged this assumption, demonstrating that MPNNs can achieve strong performance not only on highly homophilic graphs but also on highly heterophilic graphs and graphs where the neighborhood distribution is consistent within the class~\cite{HomophilyNecessity, NeighborhoodCrucial}. 
\citet{PerfLocalHomophily} showed that node misclassification often arises due to a discrepancy between local homophily, \ie how similar a node is to its neighbors, and the global homophily of its class, \ie the average homophily of all nodes belonging to that class. 
Nevertheless, most models specifically designed for heterophilic graphs fail to account for both the consistency of neighborhood distributions and the divergence between local and global homophily~\cite{H2GNN, GeomGNN, GPR-GNN}.

We tackle this challenge by adjusting the edges to obtain more consistent neighborhood distributions. 
Current graph rewiring techniques, \ie techniques to adapt the adjacency matrix to facilitate the downstream task, are typically tailored for tasks using collections of small graphs, \eg graph classification, and fail to scale to a large number of nodes commonly encountered in node classification~\cite{LASER, ProbRewiring}.
Others disregard neighborhood distribution consistency altogether ~\cite{Ricci_Rewiring, DHGR}.

We show that the neighborhood structure of a class can be represented as a combination of $k$ distinct neighborhood distributions; see Figure~\ref{fig:example} for illustration.
Our theoretical and empirical analyses reveal that neighborhood distributions harm the performance of MPNNs in two cases: (A) when the distributions are too dissimilar from each other and (B) when the neighborhood-mixture distribution decomposes into too many components.
We enhance the consistency of the mixed distributions by assigning new, separate class labels to each distribution component.
We observe that the increase in consistency is reflected by the Label Informativeness (LI)~\cite{LabelInformativeness} metric, which is the mutual information between a node's label and the label distribution of its neighbors.
However, we see that an increase in LI does not necessarily improve the downstream task performance of the MPNN.
The reason is that LI does not consider the aggregated feature vectors. 
A method is required to increase the consistency of neighborhood distributions within each class.

\begin{figure}[!htbp]
    \centering
    \subfigure[Graph with node labels.]{
        \includegraphics[width=0.38\linewidth]{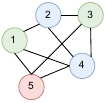}
        \label{fig:subfig1}
    }
    \hspace{0.5cm}
    \subfigure[
    Mixture embeddings after aggregation.]{
        \includegraphics[width=0.38\linewidth]{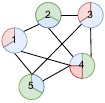}
        \label{fig:subfig2}
    }
    \caption{Problem illustration to show the idea of neighborhood-based aggregation. On the left, the graph has labels in colors. 
    After aggregation (without self-loops), each node embedding is a mixture of its neighbors' embeddings, which has been distributed according to the feature distribution of their class. 
    The green class (nodes 1 and 3) is easy to classify since their embedding after aggregation is similar. 
    The blue class (nodes 2 and 4) is difficult to classify since the neighborhood is a mixture of two distributions, which leads to dissimilar embeddings after aggregation.}
    \label{fig:example}
\end{figure}

We propose Gumbel-MPNN, an end-to-end differentiable rewiring model based on Gumbel-Softmax, which modifies the edges of the graph to obtain more consistent neighborhood distributions per class.
Our model can effectively increase the consistency of the neighborhood distributions for all classes, leading to better downstream task performance.
It reduces edge noise by removing noisy edges, handles long-range dependencies, and reduces oversquashing by adding critical connections through bottlenecks.
By pre-selecting promising candidate edges, we avoid considering all pairs of nodes, allowing our rewiring model to scale to large graphs.
Our key contributions are as follows:  
\begin{itemize}  
    \item 
    We provide a theoretical and empirical analysis of the neighborhood distributions and their impact on the performance of MPNNs.  
    \item 
    We introduce a scalable, trainable rewiring approach 
    Gumbel-MPNN
    that reduces the standard deviation of the neighborhood distribution per class by up to $10\%$.  
    \item 
    Our model reduces edge noise, improves robustness to neighborhood variations, and captures dependencies beyond the number of message-passing layers.

    \item 
    We conduct extensive experiments on $12$ benchmark datasets using $6$ baseline models  with a fair hyperparameter optimization where our model is on-par with all baselines. 
    \end{itemize}




\section{Related Work}
\label{sec:relatedwork}
We discuss related work on graph neural networks, the analysis of heterophily in graphs and existing rewiring approaches on graph representation learning.

\textbf{Graph Neural Networks and Heterophily.}
Graph Neural Networks (GNNs), especially Message Passing Neural Networks (MPNNs)~\cite{GCN, GAT, GraphSage}, represent the state-of-the-art in most graph-related tasks.
However, many works assume that standard MPNNs struggle on heterophilic graphs~\cite{CPGNN, BlockModellingGCN, HOG-GNN}.
Recent work has shown that this is not necessarily true~\cite{HomophilyNecessity, NeighborhoodCrucial}.
Standard MPNNs can perform well on such datasets if the node neighborhood distribution for a specific class is consistent within this class~\cite{HomophilyNecessity}.

The impact of heterophily in MPNNs has been studied from various perspectives.
\citet{EdgeDir} showed that the direction of the edges is crucial in heterophilic graphs and modified MPNNs such that they explicitly differentiate between incoming and outgoing edges.
\citet{PerfLocalHomophily} showed that the nodes where the local homophily differs from the global homophily of the graph are difficult to classify.
\citet{StatisticalTextHomophily} analyzed the effect of inter- and intra-class feature separability for multiple homophily levels and showed that their ratio is a good predictor for MPNN performance.
Other studies show a connection of the heterophily problem to the oversmoothing problem~\cite{Heterophily+Oversmoothing, curvature_rewireing} and that shuffling feature vectors between nodes of the same class increases the generalization of MPNNs~\cite{FeatureShuffle}.

\textbf{Specialized MPNNs for Heterophilic Graphs.}
Two main lines of work have developed specialized models for heterophilic graphs.
We first discuss modifications of the aggregation mechanism, and second, methods that model the heterophily explicitly in the learning process.

First, \citet{GeomGNN} proposed GeomGCN with a second neighborhood in a latent space and applied a bi-level aggregation scheme to combine it with the graph neighborhood.
\citet{H2GNN} combined ego and neighborhood separation with higher-order neighborhood aggregation.
\citet{FSGNN} showed that some features are detrimental during the aggregation and proposed a soft feature selection method to alleviate this issue.
\citet{GBK-GNN} learned a bi-kernel aggregation that amplifies or inhibits neighbor nodes based on their level of homophily.
He~et~al.~\cite{BlockModellingGCN} learned different aggregations by block matrices for different nodes.
\citet{JacobiConv} proposed JacobiConv, which uses an orthogonal basis of the graph Laplacian to adapt to heterophilic and homophilic graphs.
\citet{HOG-GNN} learned adapting MPNN aggregation based on a learned homophily matrix.
\citet{FAGCN} increased the frequency spectrum that the MPNN can learn to aggregate, which improves the integration of information from heterophilic neighbors.
\citet{ACM-GNN} proposed an adaptive channel mixing framework to learn how to mix low and high-frequency signals or just use the identity.
\citet{PolyGCL} learned a linear combination of low and high-pass graph filters to handle homophilic and heterophilic graphs.
\citet{NeuralSheaf} learned cellular sheaves, which generalized the learnable graph geometry.

Second, \citet{GPR-GNN} proposed a PageRank-based MPNN architecture to learn weights that adapt to multiple node label structures in the aggregation.
\citet{CPGNN} models the homophily and heterophily as a learned compatibility matrix to increase the performance on graphs with arbitrary homophily levels.
\citet{GloGNN} computed node embeddings by aggregating messages from global nodes to increase the ratio of homophilic nodes in the neighborhood.
\citet{DHGR} preprocess the graph to increase the homophily by adding and deleting edges based on the node feature similarity.
\citet{NeighborhoodCrucial} proposed a neighborhood class consistency metric that is transformed into a learning objective to modify the adjacency matrix with a graph auto-encoder.
\citet{DisamGCL} identified heterophilic nodes by inconsistent predictions during training and used them for contrastive learning with nodes in their neighborhood to increase their discriminability.

While these methods can enhance performance on heterophilic graphs, they fail to account for neighborhood consistency and do not interpret the neighborhood as a mixture of distributions.

\textbf{Graph Rewiring.}
Various problems in graph representation learning have been approached by graph rewiring, \ie adapting the adjacency matrix to facilitate the task, \eg heterophilic graph learning~\cite{DHGR}, oversmoothing~\cite{DropEdge}, oversquashing~\cite{LASER}, or graph expressivity enhancement~\cite{ProbRewiring}. 
\citet{AssortivityMixingCompGraph} compute a new graph based on pairwise structural node similarity to increase the MPNN performance.
\citet{ProbRewiring} learned to sample a new adjacency matrix to alleviate oversquashing and underreaching.
\citet{LatentGraphInferenceCW} can learn a new input graph based on cellular complexes. 
\citet{LASER} proposed a sequential rewiring technique to overcome oversquashing and underreaching in graph-level tasks.
\citet{Ricci_Rewiring} add edges based on Ricci curvature as a preprocessing to reduce the bottleneck in message passing, which results in oversquashing. 
\citet{LDS-GNN} used bi-level programming to learn a generative model that optimizes the adjacency matrix.
\citet{DHGR} add and delete edges based on feature and label neighborhood distribution similarity.
However, they require a large ratio of labeled samples, which are unavailable in common node classification tasks.
\citet{DGM} proposed Differentiable Graph Module (DGM), which learns a latent graph, similar to our model.
However, in contrast to DGM, our model uses the existing edges as a prior, while DGM learns a new adjacency matrix from scratch.
\citet{dpp} and \citet{duan2024layerdiverse} use negative sampling in a stochastic determinant point process to optimize the graph spectrum for the task.
However, it is not end-to-end learnable like our method.
\citet{DropEdge} randomly removes edges during training, which not only alleviates the effect of over-smoothing but also acts as a regularizer to prevent overfitting.
\citet{DRew} enhances the density in a cascading way to obtain a denser graph. 
While many of these methods improve graph expressivity for tasks like graph classification, they overlook the role of neighborhood distribution consistency for node classification in heterophilic graphs.

\section{Preliminaries and Assumptions}
\label{sec:preleminaries}
Let $\gG=(\gV, \gE)$ be a graph, where $\gV$ is the set of nodes and $\gE$ is the set of edges with $|\gV|=n$, $|\gE|=m$.
Each node $v_i \in \gV$ has an associated $d$-dimensional feature vector $x_i \in \mathbb{R}^d$ and a class label $y_i \in \mathcal{C}$. 
The features can be summarized in a matrix $\mX \in \mathbb{R}^{n \times d}$.
The adjacency matrix of $\gG$ is denoted by $\mA(\gG) \in \{0,1\}^{n \times n}$.
The set of nodes with a specific class is defined by $\gV_c = \{ v_i \in \gV \mid y_i = c \}$.
We assume that the features of nodes from the same class are distributed according to the same feature distribution, \ie $x_i \sim \mathcal{F}_{c=y_i}$ and that the neighbors for a node $v_i$ are independent and distributed according to $\mathcal{D}_{i}$, where $\mathcal{D}_{i}$ is the neighborhood distribution of node $v_i$.

We assume that the neighborhood distribution of class $c$, $\mathcal{D}_{c}$, for nodes of a specific class $c \in \mathcal{C}$ is a mixture distribution, \ie consists of multiple components.
Each component is having its own mode, \ie $\mathcal{D}_{c} = \sum_{l=1}^{k_c} \pi_l \mathcal{D}_{l}$, $k_c$ is the number of components for class $c$, and $\pi$ is a $k_c$-dimensional categorical distribution, see Figure~\ref{fig:example}.
In semi-supervised node classification, the goal is to learn a function $f: \gV \rightarrow \mathcal{C}$ based on $\gG$, $\mX$, $\gV_{train}$, and $\gV_{trans}$, where $\gV_{train}, \gV_{trans} \subset \gV$ are labeled subsets and unlabeled subsets of the nodes, respectively.
The goal is to predict the classes of the unlabeled subset $\gV_{test} \subseteq \gV_{trans}$.

\section{Neighborhood Distributions in MPNNs}

\label{sec:NeighDistImpact}
We investigate the decomposition of the neighborhood distribution of a class $c$ in $k_c$-many components. 
We aim to understand the neighborhood distribution of each class as a mixture of $k_c$ components, as stated in Section~\ref{sec:preleminaries}.
We show theoretically and empirically that differences in neighborhood distributions are harmful to the MPNN performance. 
The decomposition of neighborhood distributions has an impact on homophily measures in such a way that they fail to provide useful information about a graph's MPNN performance.

So far, prior literature either assumed that MPNNs only work for homophilic graphs~\cite{CPGNN, BlockModellingGCN, HOG-GNN} or graphs where the classes have consistent neighborhood distribution~\cite{HomophilyNecessity}.
\citet{HomophilyNecessity} showed that the distance of an embedding is close to the mean embedding of its class.
We show that this only holds true if the distance between the components of the neighborhood distributions of these nodes is small, since the distance between the expectations of two embeddings is lower bound by the distance of the expectations of their neighborhood distributions.
The assumptions of Section~\ref{sec:preleminaries} allow us to prove the following theorem.

\begin{theorem}
    \label{thorem:expected_diff}
    Consider a graph $\gG=(\gV, \gE)$, with class-specific feature distributions $\{\mathcal{F}_c, c \in C\}$, and discrete neighborhood distributions $\{\mathcal{D}_l, l \in [k_c]\})$ for each class, fulfilling the assumptions above.
    Then for two nodes $v_i, v_j \in V$, with the same class $y_i = y_j$ and different discrete neighborhood distributions $\mathcal{D}_{p_i} \neq \mathcal{D}_{p_j}$ the expected distance between their MPNN embeddings $h_i$, $h_j$ is lower bounded by the distance of the means of the neighborhood distribution components:
    
    \begin{center}
    \resizebox{0.99\linewidth}{!}{%
    $\displaystyle
    \mathbb{E}\big[\norm{h_i - h_j}\big] \geq \sigma_{min}(W) \norm{
    \mathbb{E}_{
        x \sim (\mathcal{F}_c | c \sim D_{p_i})}[x]
    -
    \mathbb{E}_{
        x \sim (\mathcal{F}_c | c \sim D_{p_j})}[x]
    }
    $%
    }
    \end{center}

where $\sigma_{min}(W)$ denotes the smallest singular value of the learnable weight matrix $W$.
\end{theorem}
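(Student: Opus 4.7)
The plan is to reduce the inequality to two standard facts: Jensen's inequality for the convex norm function, and the singular-value bound $\norm{W v} \geq \sigma_{\min}(W)\norm{v}$ for any linear map. Everything else is bookkeeping to push the expectation through the MPNN aggregation.

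First I would pin down the form of the embedding. Under the standard MPNN update with a linear feature transform $W$ and mean (or sum) aggregation over the sampled neighborhood, one can write $h_i = W \cdot \operatorname{agg}\bigl(\{x_j : j \in N(i)\}\bigr)$. Because the neighbors of $v_i$ are assumed independent and drawn from $\mathcal{D}_{p_i}$, and each neighbor's feature is then drawn from $\mathcal{F}_{y_j}$, I would compute
\begin{equation*}
\mathbb{E}[h_i] \;=\; W\cdot \mathbb{E}_{c \sim \mathcal{D}_{p_i}}\bigl[\mathbb{E}_{x \sim \mathcal{F}_c}[x]\bigr] \;=\; W \cdot \mu_{p_i},
\end{equation*}
where I abbreviate $\mu_{p} := \mathbb{E}_{x \sim (\mathcal{F}_c\mid c \sim \mathcal{D}_p)}[x]$. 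By independence of the two neighborhoods, the same holds for $v_j$, and linearity of expectation gives $\mathbb{E}[h_i - h_j] = W(\mu_{p_i} - \mu_{p_j})$.

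Next I would invoke Jensen's inequality. Since the norm is convex,
\begin{equation*}
\mathbb{E}\bigl[\norm{h_i - h_j}\bigr] \;\geq\; \norm{\mathbb{E}[h_i - h_j]} \;=\; \norm{W(\mu_{p_i} - \mu_{p_j})}.
\end{equation*}
Applying the well-known lower bound $\norm{Wv} \geq \sigma_{\min}(W)\norm{v}$ to $v = \mu_{p_i}-\mu_{p_j}$ then yields exactly the stated inequality. Substituting back the definition of $\mu_{p_i}, \mu_{p_j}$ in terms of the conditional feature expectations recovers the form written in the theorem.

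The main obstacle, and the place I would spend most of the care, is justifying step one cleanly. The argument needs the aggregation step to commute with expectation, which is automatic for mean/sum aggregators but would require either a linearization assumption or a different phrasing for nonlinear aggregators (max, attention) or for updates that include a nonlinearity \emph{before} the $W$ multiplication. I would therefore state upfront that the theorem concerns the MPNN variant whose update at the considered layer is of the form $h_i = W \cdot \operatorname{mean}(x_j)$, so that taking expectations inside the aggregation is straightforward, and so that $W$ in the bound is unambiguously the learnable weight matrix whose smallest singular value appears on the right-hand side. Self-loops, if present, contribute the term $x_i$ whose expectation is itself $\mu_{y_i}$; since $y_i = y_j$ this term cancels in the difference $\mathbb{E}[h_i - h_j]$ and does not affect the bound.
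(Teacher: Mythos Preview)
Your proposal is correct and follows essentially the same route as the paper: Jensen's inequality to move the norm inside the expectation, expansion of $\mathbb{E}[h_i]$ via the mean-aggregation update $h_i = W\cdot \tfrac{1}{\deg(i)}\sum_{k\in N(i)} x_k$, and finally the bound $\norm{Wv}\geq \sigma_{\min}(W)\norm{v}$. Your additional remarks on why mean aggregation is needed for the expectation to pass through, and on the cancellation of a self-loop term since $y_i=y_j$, are sound clarifications that the paper leaves implicit.
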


\begin{proof}

By using Jensen's inequality, we get: 
\begin{equation*}
    \mathbb{E}[\norm{h_i - h_j}] \geq \norm{\mathbb{E}[h_i - h_j]} = \norm{\mathbb{E}[h_i] - \mathbb{E}[h_j]} 
\end{equation*}
With the aggregation mechanism of the MPNN, we express the expectation of the embeddings in terms of the expectation of the neighbors and features:
\begin{align*}
& \norm{\mathbb{E}\left[\sum_{k \in N(i)} \frac{1}{\deg(i)} W x_k\right] 
- \mathbb{E}\left[\sum_{l \in N(j)} \frac{1}{\deg(j)} W x_l\right]} \\
=& \left\lVert W \left( \frac{1}{\deg(i)} \sum_{k \in N(i)} \mathbb{E}_{x_k \sim \mathcal{F}_c ,\, c \sim D_{p_i}}[x_k] \right. \right. \\
 &\quad\quad\quad\quad\quad \left. \left. - \frac{1}{\deg(j)} \sum_{l \in N(j)} \mathbb{E}_{x_l \sim \mathcal{F}_c ,\, c \sim D_{p_j}}[x_l] \right) \right\rVert \\
=& \norm{W \left( \mathbb{E}_{x \sim \mathcal{F}_c ,\, c \sim D_{p_i}}[x] 
- \mathbb{E}_{x \sim \mathcal{F}_c ,\, c \sim D_{p_j}}[x] \right)} \\
\geq & \sigma_{\min}(W) \norm{\mathbb{E}_{x \sim \mathcal{F}_c ,\, c \sim D_{p_i}}[x] 
- \mathbb{E}_{x \sim \mathcal{F}_c ,\, c \sim D_{p_j}}[x]}
\end{align*}
Note that $\sigma_{min}$ is the smallest singular value of $W$, and we used the property that the norm of a matrix is always larger than its smallest singular value for the last step.

\end{proof}

Intuitively, it shows that the expected distance of the MPNN embeddings is lower bounded by the distance of the neighborhood distribution, although the nodes are from the same class, \ie share the same feature distribution.
This is a problem since embeddings far from each other tend to be classified into different classes.


We empirically address this problem of multiple neighborhood components per class by splitting classes into subsets based on 
$1$-hop neighborhood distributions.  
Specifically, we compute the empirical $1$-hop distribution, \(\hat{p}_{N(v_i)}\), for each node \(v_i\).  
For each class, we apply a Gaussian Mixture Model (GMM) to cluster these distributions and distinguish between different modes in the neighborhood distribution within the class.  
The number of clusters is determined using the Bayesian Information Criterion (BIC)~\cite{BIC}, which penalizes models with many parameters. 
We perform hard clustering with the GMM, assigning each node to a single pseudo-class, which represents a component of the neighborhood-mixture distribution of a class; details can be found in Appendix~\ref{app:GMM_Clust}.  

Subsequently, we compare the refined $1$-hop neighborhood distributions based on pseudo labels against the original ones using edge homophily~\cite{H2GNN}, adjusted homophily~\cite{LabelInformativeness}, and Label Informativeness (LI)~\cite{LabelInformativeness}.
%
We compare the measures on commonly used homophilic and heterophilic node classification datasets in Table~\ref{tab:pre_exp_li}.

\begin{table}[htbp]
    \centering
    \caption{Measures of neighborhood distribution of original dataset labels and neighborhood-based pseudo-labeled classes after clustering.}
     \begin{adjustbox}{width=\linewidth}
    \begin{tabular}{l|ccc|ccc}
    \toprule
    & \multicolumn{3}{c|}{} & \multicolumn{3}{c}{Neighborhood-based} \\
    & \multicolumn{3}{c|}{Dataset labels} & \multicolumn{3}{c}{ pseudo labels} \\
    \midrule
    Dataset & $h_{edge}$ & $h_{adj}$ & $LI_\text{edge}$  & $h_{edge}$  & $h_{adj}$  & $LI_\text{edge}$ \\
    \midrule
    \multicolumn{7}{l}{Homophilic} \\
    \midrule
     Cora & $0.81$ & $0.77$ & $0.59$ & $0.45$ & $0.41$ & $0.52$ \\
    CiteSeer & $0.74$ & $0.67$ & $0.45$ & $0.42$ & $0.38$ & $0.55$ \\
    PubMed & $0.80$ & $0.69$ & $0.41$ & $0.25$ & $0.19$ & $0.32$ \\
    OGBn-ArXiv & $0.66$ & $0.59$ & $0.46$ & $0.33$ & $0.30$ & $0.36$ \\
    \midrule
    \multicolumn{7}{l}{Heterophilic} \\
    \midrule
    Squirrel & $0.22$ & $0.01$ & $0.00$ & $0.05$ & $-0.04$ & $0.11$ \\
    Chameleon & $0.24$ & $0.04$ & $0.04$ & $0.13$ & $0.03$ & $0.18$ \\
    Actor & $0.22$ & $0.01$ & $0.00$ & $0.03$ & $0.00$ & $0.07$ \\
    Roman-Empire & $0.05$ & $-0.05$ & $0.11$ & $0.00$ & $0.00$ & $0.35$ \\
    Questions & $0.84$ & $0.02$ & $0.00$ & $0.19$ & $0.03$ & $0.03$ \\
    Minesweeper & $0.68$ & $0.01$ & $0.00$ & $0.38$ & $0.13$ & $0.07$ \\
    Tolokers & $0.59$ & $0.09$ & $0.01$ & $0.07$ & $0.04$ & $0.10$ \\
    Amazon-ratings & $0.38$ & $0.14$ & $0.03$ & $0.06$ & $0.04$ & $0.18$ \\
    \bottomrule
    \end{tabular}
    \end{adjustbox}
    
    \label{tab:pre_exp_li}
\end{table}

The results show that LI increases for all heterophilic datasets.
\citet{LabelInformativeness} showed that LI strongly correlates with MPNN performance.
Based on the results of \citet{LabelInformativeness}, splitting up the classes to increase the LI makes the task easier for MPNNs since each class contains a fixed, distinguishable number of neighborhood distributions.
Therefore, our experiment shows that decomposing the neighborhood distribution of a single class in a mixture of multiple neighborhood distributions makes the task easier.
However, in pre-experiments, we observe that learning the decomposed classes obtained by this approach does not improve MPNN classification performance.
The reason is that the aggregated node's features remain unchanged, yielding the same expected embeddings (in the sense of Theorem~\ref{thorem:expected_diff}) despite the updated labels.
We see that the existing measures by~\citet{LabelInformativeness,H2GNN} alone are insufficient to reliably predict MPNN performance based on neighborhood distributions. 
We conclude that the edges must be changed to make the neighborhood distribution more consistent, as explored below.

\section{Gumbel-MPNN: Learnable Graph Rewiring via Gumbel-Softmax}
\label{sec:methods}

We describe our end-to-end differentiable graph rewiring model based on Gumbel Softmax, explain the gradient computation, our regularization terms, and the reduction of the complexity by edge candidate selection. 
We use an edge model $g_u: X,A(G) \rightarrow [0,1]$ with parameters $u$ to estimate the parameters of the probability distributions $\theta_{ij}$ for each (potential) edge $e_{ij}$, where $\bm{\theta} = g_u(X,A(G))$ denote the parameters of the probability distribution to sample a whole adjacency matrix.
Based on these parameters, we sample edges according to $e_{ij} \sim p_{\theta_{ij}=g_u(x_i, x_j)}(E_{ij})$.
The probability for an edge is modeled by a Bernoulli distribution, \ie $p_{\theta_{ij}}(E_{ij} = 1) = \sigmoid(\theta_{ij})$, where $E_{ij}$ is a random variable modeling the presence of edge $e_{ij}$ from $v_i$ to $v_j$.
The resulting probability to sample a specific new adjacency matrix is $\tilde{A}(\tilde{G}) \sim p_{\bm{\theta}} = \prod_{i,j=1}^n p_{\theta_{ij}}(E_{ij} = 1)$. 
The features $X$ and the rewired adjacency matrix $\tilde{A}$ are then input for the MPNN $f_w: \mathbb{R}^{n \times d} \times \tilde{A}(\tilde{G}) \rightarrow \mathcal{C}$ with parameters $w$ to compute the final predictions.

\textbf{Gradient Computation.}
The discrete sampling of edges from a Bernoulli distribution is not differentiable. 
Therefore, we need to estimate the gradient.
Since the edges are independent and Bernoulli distributed, we can use the Gumbel-Softmax reparameterization trick~\cite{gumbel} to estimate the gradients. 
This requires selecting a temperature parameter $\tau$, which determines the degree of determinism in the edge sampling. 
We used a small value of $\tau=0.1$, which corresponds to very deterministic sampling.
Let $l(\hat{y}, y) = l(f_w(\tilde{A}, X), y)$ be the loss function for a single sample with a sampled adjacency matrix.
Therefore, the overall loss function is given by:
\begin{equation}
    \mathcal{L}(A(G), X, y;w,u) = \mathbb{E}_{\tilde{A} \sim p_{\mathbf{ \bm{\theta}}}}[\ell(f_w (X, \tilde{A}), y)]
\end{equation}
where $\bm{\theta} = g_u(X,A(G))$.
The gradient for $w$, the parameters of $f$, can be computed by regular backpropagation. 
The parameters $u$ of the edge model $g_u$ require drawing Monte-Carlo samples from $p_{\bm{\theta}}$, which is expensive and does converge slowly~\cite{reinforce}.
Fortunately, $p_{\bm{\theta}}$ is Bernoulli distributed as a product of Bernoulli distributed variables. Therefore, we can use the Gumbel-Softmax trick to estimate the gradient effectively~\cite{gumbel}.

For the final model, we use a GCN as $f_w$ and a Bilinear-MLP for the edge model $g_u$, \ie a model of the form $g_u(x_i, x_j) = \theta_{ij} = \sigmoid(x_i W x_j + b)$ for each potential edge. 

\textbf{Regularization.}
By adding different structural regularization terms, we can encourage the model to learn different neighborhood distributions. 
The regularization terms can be part of hyperparameter tuning or decided upon based on task knowledge.

First, we introduce a degree regularizer that encourages each node to have a minimum degree.
It is well known that MPNNs struggle with nodes with a low degree~\cite{high_deg_good}.
Therefore, this regularization term focuses on these nodes to increase their degree and, thus, the MPNN classification performance.
The regularizer is given by
\begin{equation}
    \mathcal{L}_{deg} = \frac{1}{|V|}\sum_{v_i \in V} 
    \operatorname{ReLU}\left(d^* - d_i + \delta\right)^2 \,,
\label{eq:struct_reg}
\end{equation}
where $d_i$ is the degree of node $i$, $d^*$ is the targeted minimum degree, and $\delta$ is a tolerance level.
The $ReLU$ ensures that the regularizer ignores nodes with a high degree. 

The label consistency regularizer promotes new edges between nodes with the same label to increase the homophily of the graph.
The regularizer is defined by

\begin{equation}
\mathcal{L}_{label} = \frac{1}{|\tilde{E}|} \sum_{e_{ij} \in \tilde{E}} (1-\tilde{y_i}~\tilde{y_j}),
\end{equation}
where $\tilde{E}$ is the set of edges in the rewired graph, and $\tilde{y_i}$ is the true label if $v_i \in V_{train}$.
Otherwise, it is the model prediction $\hat{y_i}$. 

The loss $\mathcal{L}_{label}$ encourages neighbors to have the same label, \ie is based on homophily.
The neighborhood consistency loss $\mathcal{L}_{Ncon}$ extends this by encouraging nodes with the same $1$-hop label distribution to be connected, independent of whether the neighbors are homophilic or not. 
The $1$-hot neighborhood consistency regularizer is defined by

\begin{equation}
    \mathcal{L}_{Ncon} = \frac{1}{|\tilde{E}|} \sum_{e_{ij} \in \tilde{E}} (1-\tilde{p_i} \tilde{p_j}) \,, 
\end{equation}
where $\tilde{p_i}$ is the $1$-hop aggregation of the predictions or true labels, where available, including self-loops.
This regularization reduces the number of components in a class's mixture neighborhood distribution.

The $1$-hop neighborhood consistency loss encourages connecting nodes with the same neighborhood distribution. 
However, it is also important that nodes of different classes have dissimilar neighborhood distributions.
This allows better separation of nodes of different classes.
For this reason, we introduce an inter-class distance loss defined as

\begin{equation}
    \mathcal{L}_{inter} = \frac{1}{|C|} \sum_{c, \tilde{c} \in C, c \neq \tilde{c}} ReLU(m - \| r_c - r_{\tilde{c}} \|_2) \,,
\end{equation}
where $r_c$ is a neighborhood distribution prototype computed by $p_k = \frac{1}{|\mathcal{V}_c|}\sum_{v \in \mathcal{V}_c} w_v \tilde{y}_v$, $m$ is some margin, and $\tilde{y}_v$ is the one-hot encoded prediction of the label.
The weight $w_v$ is the product of the normalized node degree of $v$ and the normalized entropy over $\tilde{y}_v$.
Again, the label is the true label if $v \in V_{train}$. Otherwise, it is the model prediction.

\textbf{Complexity and Edge Candidate Selection.}
The complexity of our method depends on the complexity of the MPNN $f$.
We denote the complexity by $\mathcal{O}(\operatorname{MPNN}(|\gV|,|\gE_{\text{cand}}|,h,d,L,C)$, where $|\gE_{\text{cand}}|$ is the maximal number of candidate edges that can be sampled in one forward pass, $h$ is the hidden dimension, $L$ is the number of layers, and $C = |\mathcal{C}|$ is the number of classes. 
Except for $h$, each parameter is linear in the complexity for most MPNNs, \eg GCN~\cite{GCN}.
The complexity of the edge model $g$ is given by $\mathcal{O}(h^2, |\gE_{\text{cand}}|)$.
The maximal possible number of edges for $|\gE_{\text{cand}}|$ is $|\gV|^2$ by considering every possible edge as a candidate.

Since $\mathcal{O}(|\gV|^2)$ is not feasible for large graphs, we limit the number of potential edges.
For this reason, we select a subset of potential edge candidates $\gE_{\text{cand}} \subseteq \gV \times \gV$.

We use one of four edge candidate pre-selection strategies.
The first one is based on feature similarity per node.
For each node, we select the $s$-many nodes with the highest feature similarity based on the dot product as potential edge candidates. 
Therefore, the $\mathcal{O}(|V|^2)$ factor reduces to $\mathcal{O}(s|V|)$, where $s$ is a small number, \eg three or five.
The second strategy is based on the highest node feature similarities in the whole graph.
Based on all node pairs in the graph, we select the $s$-many edge candidates with the highest feature similarity.
In this setting, it is possible that some nodes do not have any edge candidates connected to them. 
This can be a disadvantage for some tasks.
The complexity reduces from $\mathcal{O}(|V|^2)$ to $\mathcal{O}(s)$, here $s$ is larger, \eg $2|E|$.
Higher is, in general, better; the limitation is the available memory.
The third strategy leverages the local structure in the graph.
We randomly select $s$-many nodes in the $2$-hop neighborhood of each node.
This results in $\mathcal{O}(|V|^2)$ many edges when using the whole adjacency matrix, reducing to $\mathcal{O}(s|V|)$ many edges after the edge candidate selection.

\section{Experiments}
\label{sec:experiments}

We show the effectiveness of Gumbel-MPNN by evaluating it against baselines on homophilic and heterophilic node classification datasets.
We demonstrate that  Gumbel-MPNN learns structural properties like average degree and that it captures long-range dependencies by adding the crucial edges for the task, \eg introducing shortcut edges.
All node classification experiments are conducted in a transductive setting.
%
For details on the datasets, see Appendix~\ref{app:dataset}, and for the hyperparameters see Appendix~\ref{app:hyperparam}.

\textbf{Node Classification on Benchmark Datasets.}
We compare Gumbel-MPNN against standard MPNNs to demonstrate the benefit of rewiring.
As baseline models, we consider an optimized GCN~\cite{GCN}, GraphSAGE~\cite{GraphSage}, and GAT~\cite{GAT}.
It has been shown that optimizing these classical models results in state-of-the-art classification performance~\cite{HomophilyNecessity, strong_baselines}.
Additionally, we compare our model to an MLP, \ie a model that does not use the edges.
An MLP performs well when the classes are well determined by the node features. 
However, it can not be corrupted in cases where edges may be uninformative or even harmful.
To show that the rewiring is not just random regularization, we also compare it to DropEdge~\cite{DropEdge}, which randomly drops edges during training.
Finally, we compare our model to a rewiring variant of a GCN~\cite{GCN} with the Stochastic Discrete Ricci Flow (SDRF)~\cite{Ricci_Rewiring}.
It rewires the graph in a pre-processing step by adding edges before the training.
The goal is to reduce bottlenecks in the graph and to evaluate the benefit of learning the graph structure.

We evaluate all models on the homophilic datasets 
Cora, 
CiteSeer~\cite{CoraCiteSeer}, 
PubMed~\cite{PubMed}, and 
OGBn-ArXiv~\cite{ogb-benchmarks}, and
the heterophilic datasets Squirrel~\cite{SquirrelChameleon}, Chameleon~\cite{SquirrelChameleon} (where we use the filtered version from \citet{HeteroPhilDSNew}), Actor~\cite{GeomGNN}, and the datasets Roman-Empire, Questions, Tolokers, Minesweeper, and Amazon-Ratings~\cite{HeteroPhilDSNew}.

We measure the node classification performance and compare the initial neighborhood distribution of the graph to the neighborhood distribution of the rewired graphs.
To this end, we compute the standard deviation per class and average it, as well as the number of classes detected by a Gaussian Mixture Model based on the Bayesian Information Criterion. 

The results can be found in Table~\ref{tab:results_real-world} and Table~\ref{tab:distribution_comparison}.
We see that Gumbel-MPNN is on par with the baselines and reduces the neighborhood deviation and number of components per class effectivly.

\newcommand{\mytextsubscript}[1]{}
\renewcommand{\mytextsubscript}[1]{{\color{black}~\textsubscript{#1}}}

\begin{table*}[!htbp]
    \centering
    \caption{Node classification performance on the homophilic and heterophilic real-world datasets. The best result is marked in bold. All methods were tuned per dataset with the same hyperparameter space, \eg between $1$ and $5$ layers.}

    \begin{adjustbox}{width=\textwidth}

    \begin{tabular}{l|cccccccccccc}
    
    \toprule 
                   & Cora & CiteSeer & PubMed & OGBn-ArXiv &Squirrel & Chameleon & Actor & Roman-Empire & Questions & Minesweeper & Tolokers & Amazon-ratings\\
    \midrule
     MLP  & $75.61\mytextsubscript{0.20}$ & $72.89\mytextsubscript{0.10}$ & $87.30\mytextsubscript{0.04}$ &$55.06\mytextsubscript{0.09}$ & $39.95\mytextsubscript{0.23}$ & $37.72\mytextsubscript{0.62}$ & $36.50\mytextsubscript{0.13}$ & $65.88\mytextsubscript{0.04}$ & $70.82\mytextsubscript{0.10}$ & $50.38\mytextsubscript{0.12}$ & $73.62\mytextsubscript{0.20}$ & $46.68\mytextsubscript{0.13}$\\
     GCN & $ 87.47\mytextsubscript{0.12}$ & $76.10\mytextsubscript{0.15}$ & $88.07\mytextsubscript{0.04}$ & $71.27\mytextsubscript{0.17}$ & $35.49\mytextsubscript{0.16}$  & $39.17\mytextsubscript{0.31}$ & $35.21\mytextsubscript{0.07}$ & $78.82\mytextsubscript{0.09}$ & $76.07\mytextsubscript{0.24}$ & $91.50\mytextsubscript{0.06}$ & $80.75\mytextsubscript{0.12}$ & $49.65\mytextsubscript{0.13}$ \\
     GraphSAGE & $87.89\mytextsubscript{0.13}$ & $76.12\mytextsubscript{0.14} $ & $88.22\mytextsubscript{0.04}$ & $69.65\mytextsubscript{0.27}$ & $37.22\mytextsubscript{0.22}$  & $41.79\mytextsubscript{0.40}$ & $36.79\mytextsubscript{0.10}$ & $81.78\mytextsubscript{0.06}$ & $75.38\mytextsubscript{0.09}$ & $\textbf{93.58}\mytextsubscript{0.05}$ & $80.16\mytextsubscript{0.12}$ & $50.61\mytextsubscript{0.10}$ \\
     GAT & $87.49\mytextsubscript{0.15}$ & $\textbf{76.43}\mytextsubscript{0.10}$ & $88.51\mytextsubscript{0.04}$ & $\textbf{71.97}\mytextsubscript{0.12}$ & $\textbf{37.41}\mytextsubscript{0.15}$  & $40.23\mytextsubscript{0.33}$  & $34.91\mytextsubscript{0.12}$ & $81.45\mytextsubscript{0.09}$ & $72.54\mytextsubscript{0.10}$ & $92.38\mytextsubscript{0.09}$ & $80.99\mytextsubscript{0.17}$ & $\textbf{50.56}\mytextsubscript{0.06}$\\
     GCN with DropEdge  & $87.93\mytextsubscript{0.13}$ & $75.99\mytextsubscript{0.15}$ & $88.64\mytextsubscript{0.03}$ & $71.75\mytextsubscript{0.16}$ & $34.96\mytextsubscript{0.23}$ &  $40.76\mytextsubscript{0.41}$ & $33.59\mytextsubscript{0.10}$ & $78.26\mytextsubscript{0.11}$ & $55.92\mytextsubscript{0.11}$ & $89.95\mytextsubscript{0.07}$ & $81.49\mytextsubscript{0.11}$  & $49.84\mytextsubscript{0.13}$ \\
     SDRF with GCN  & $87.88\mytextsubscript{0.13}$ & $76.12\mytextsubscript{0.14}$ & $88.05\mytextsubscript{0.04}$ & OOT & $36.07\mytextsubscript{0.20}$  & $35.88\mytextsubscript{0.15}$ & $35.15\mytextsubscript{0.05}$ & $80.60\mytextsubscript{0.07}$ & $57.16\mytextsubscript{0.14}$ & $89.93\mytextsubscript{0.06}$ & $80.90\mytextsubscript{0.11}$ & $44.03\mytextsubscript{0.18}$
 \\
     \hline
     \textit{Gumbel-MPNN (ours)}  & $\textbf{87.95}\mytextsubscript{0.10}$  & $75.54\mytextsubscript{0.17}$ & $\textbf{88.73}\mytextsubscript{0.04}$ & $70.44\mytextsubscript{0.19}$ & $36.93\mytextsubscript{0.21}$ & $\textbf{41.96}\mytextsubscript{0.32}$  & $\textbf{37.13}\mytextsubscript{0.10}$ & $\textbf{82.23}\mytextsubscript{0.07}$
 & $\textbf{76.61}\mytextsubscript{0.12}$  & $91.18\mytextsubscript{0.13}$  & $\textbf{81.56}\mytextsubscript{0.41}$  & $50.08\mytextsubscript{0.06}$ \\       
     
     \bottomrule
    \end{tabular}
    \end{adjustbox}
    
    \label{tab:results_real-world}
\end{table*}

\begin{table}[!htbp]
    \centering
    \caption{Average standard deviation of the neighborhood distribution and number of clusters with a GMM before and after training.}
     \begin{adjustbox}{width=0.9\linewidth}
    \begin{tabular}{l|rr|rr}
    \toprule
    & \multicolumn{2}{c|}{Original Graph} & \multicolumn{2}{c}{Rewired Graph} \\
    \midrule
    Dataset & \makecell{Std. per\\ Class $\downarrow$} & \makecell{Number of\\ clusters $\downarrow$} & \makecell{Std. per\\ Class $\downarrow$} & \makecell{Number of\\ clusters $\downarrow$} \\
    \midrule
    Cora & 0.0957 & $47$ & 0.0853 & $41$  \\
    CiteSeer & 0.1396 & $74$ & 0.1340 &  $69$ \\
    PubMed & 0.1700 & $59$ & 0.1659 &  $31$ \\
    OGBn-ArXiv & 0.0450 & $377$ &  0.0450 &  $377$\\
    \midrule
    Squirrel & 0.1433 & $94$ & 0.1338 &  $61$ \\
    Chameleon & 0.1675 & $69$ &  0.1573 & $59$ \\
    Actor & 0.1686 & $109$ & 0.1707 & $95$ \\
    Roman-Empire & 0.0790 & $266$ & 0.0707 & $228$ \\
    Questions & 0.1691 & $36$ & 0.1591 & $33$ \\
    Minesweeper & 0.1299 & $18$ & 0.1439 &  $44$\\
    Tolokers & 0.1995 & $35$ & 0.1891 & $30$ \\
    Amazon-ratings & 0.1633 & $105$ & 0.1534  & $103$ \\
    \bottomrule
    \end{tabular}
    \end{adjustbox}
    
    \label{tab:distribution_comparison}
\end{table}

\textbf{Synthetically Adapted Neighborhood Distributions.} 
We investigate the model behavior with multiple discrete neighborhood distributions.
In contrast to Section~\ref{sec:NeighDistImpact}, we do not divide the nodes based on their neighborhoods but create these neighborhoods artificially. 
To prepare the dataset for this experiment, we use the nodes, features, and labels of the existing real-world datasets but replace the edges with a new edge set. 
To create the edges, we split each class into $k$-many neighborhood distributions.
For each node $v_i \in V$ with class $y_i$, we sample a predefined distribution $D_{p_l}$ with $l \in \mathcal{U}({1,2, \dots k})$.
Each $D_{p_l}$ is a different heterophilic neighborhood distribution, with equal probability for the classes.
On this basis, we sample a class $c \sim D_{p_l}$, and from that class, a node $v_j \in V_c$ and add the edge $e_{ij}$ to our graph.
We train the models for $k \in \{1, \dots 7\}$, where 1 means each class has exactly one neighborhood distribution component. 
See Figure~\ref{fig:synthetic_result} for the result.

\begin{figure}[!htbp]
    \centering
    \includegraphics[width=0.75\linewidth]{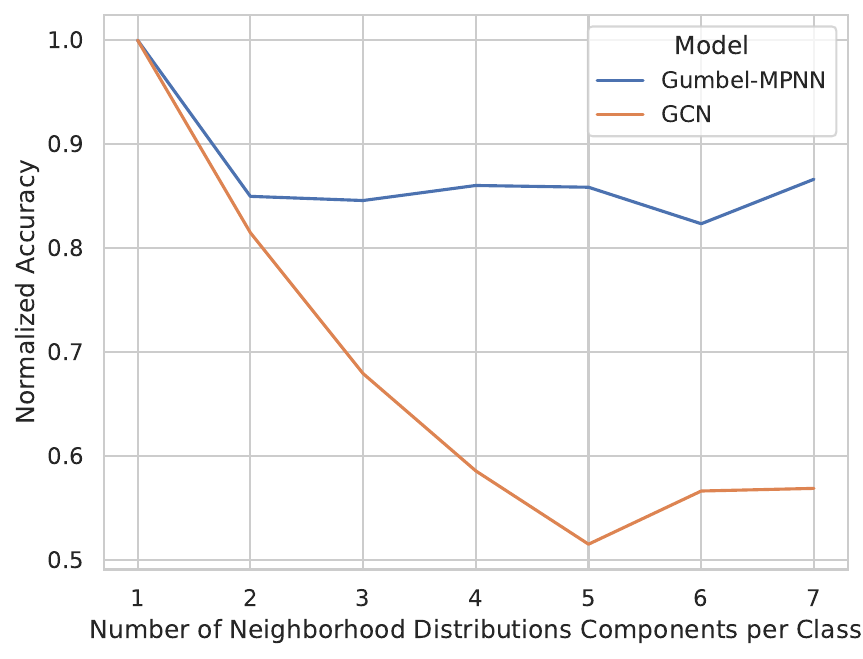}, which
    \caption{GCN versus Gumbel-MPNN on Actor with different numbers of synthetically generated neighborhood distributions.}
    \label{fig:synthetic_result}
\end{figure}

\textbf{Adapting the Graph Structure from a Supervision Signal.}
Our approach can adapt a graph to given graph properties. 
We define a target degree $d^*$ as the average degree of the graph plus $5$ and learn it by using the loss of Equation~\ref{eq:struct_reg} with a high weight and the respective degree.
We compare the min, average, and max degrees before and after training to our specified degree to show that the model adapted the graph more towards the desired structure.
The result is presented in Table~\ref{tab:struct_learn}, showing that Gumbel-MPNN reduces the deviation in a class's neighborhood on $10$ out of $12$ datasets.

\begin{table}[!ht]
    \centering
     \caption{Min, max, and average degrees of the original and rewired graph before and after the structure learning experiment.}
    \begin{adjustbox}{width=0.9\linewidth}
    \begin{tabular}{l|rrr|rrr}
    \toprule
     & \multicolumn{3}{c|}{Original Graph} & \multicolumn{3}{c}{Rewired Graph} \\
     \midrule
     Dataset  &  Min & Avg & Max &  Min & Avg &  Max\\
     \midrule
      Cora   & 1 & 3.9 & 168 & 4 & 11.5 & 161\\ 
      CiteSeer  & 0 & 2.7 & 99 & 5 & 12.9 &  22\\
      PubMed  & 0 & 4.5 & 171 & 1 & 13.8 & 508\\
      OGBn-ArXiv  & 1 & 13.67 & 13\,161 & 1 & 13.68 & 13\,161\\
      Squirrel & 1 & 76.3 & 1\,904 & 1 & 76.5 & 1\,796 \\
      Chameleon  & 1 & 27.6 & 732 & 1 & 28.4 & 686\\
      Actor  & 1 & 7.0 & 1303 & 1 & 9.5 & 1\,284\\
      Roman-Empire  & 2 & 2.9 & 14 & 1 & 4.6 & 2\,703 \\
      Questions  & 1 & 6.3 & 1\,539 & 4 & 16.2 & 133\\
      Minesweeper  & 3 & 7.9 & 8 & 4 & 16.2 & 133 \\
      Tolokers  & 1 & 88.3 & 2\,138 & 0 & 28.3 & 637 \\
      Amazon-ratings  & 5 & 7.6  & 132  & 7 & 10.5 & 130 \\
    \bottomrule
    \end{tabular}
    \end{adjustbox}

    \label{tab:struct_learn}
\end{table}

\textbf{Long-range Dependencies.}
We use the synthetic dataset of \citet{ProbRewiring} to evaluate the capabilities of the model to learn long-range dependencies in node classification.
This dataset consists of $2^R$ binary trees, each with $2^R$ leaves with a $0$ or $1$ feature.
The task is to predict the label at the root node, which is the sum of the features at the leaf nodes, \ie the number of leaves with $1$ as a feature.
For a model to perform well on this task, it needs to propagate the features of the leaves across all levels of the tree up to the root node.
See Figure~\ref{fig:sample_leafcount} for an example, where the blue nodes are $1$ and the orange nodes are $0$, therefore, the model needs to predict a $3$ here.
We compare our model to a regular GCN~\cite{GCN} with two and four layers and Stochastic Discrete Ricci Flow (SDRF)~\cite{Ricci_Rewiring} to show that the dependency needs to be learned to obtain good performance.
The results can be found in Table~\ref{tab:long_range}.

\begin{figure}[!htbp]
    \centering
    \begin{minipage}{0.24\textwidth}
        \centering
    \begin{adjustbox}{width=\linewidth}

    \begin{tabular}{l|r}
    \toprule
     Model & Accuracy \\
     \midrule
     2-layer GCN    &  $14.29$\\
     4-layer GCN     &  $100.00$\\
     2-layer Gumbel-MPNN  & $100.00$ \\
     \bottomrule
    \end{tabular}
    \end{adjustbox}
        \captionof{table}{Accuracy of on the long-range dependency task.}
        \label{tab:long_range}
    \end{minipage}%

    \begin{minipage}{0.23\textwidth}
        \centering
        \includegraphics[width=0.9\linewidth]{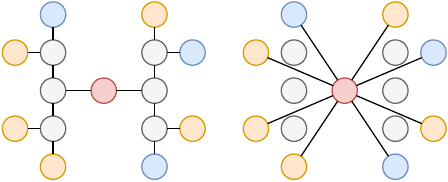}  
        \caption{
        Left: Example input graph; Right: After Gumbel-MPNN Rewiring}
        \label{fig:sample_leafcount}
    \end{minipage}
\end{figure}

\textbf{Oversquashing.}
A common issue with MPNNs is the oversquashing~\cite{Ricci_Rewiring}, \ie the problem when information from distant nodes is compressed into fixed-size embeddings through successive message-passing layers, leading to a bottleneck that inhibits the network from learning meaningful embeddings~\cite{Ricci_Rewiring}.
Common approaches to tackle this problem are Jumping Knowledge or Skip Connection~\cite{JumpingKnowledge, GCNII}, 
and graph rewiring methods~\cite{Ricci_Rewiring, ProbRewiring}. 
However, most rewiring methods are developed for graph-level tasks~\cite{ProbRewiring}, \ie do not scale or are not applicable for node-level tasks.
We propose a new node classification dataset where the model has to learn the relationship of distant nodes through a bottleneck.
Each node on one side of the bottleneck has a corresponding node on the other side, mapped by a one-hot encoded ID. 
The nodes on one side have the label for the node on the other side as a feature, so the model has to propagate the feature through the bottleneck. 
We compare GCN and SDRF as shown in Figure~\ref{fig:oversquashing}.

\begin{figure}[!ht]
    \centering
    \includegraphics[width=0.35\textwidth]{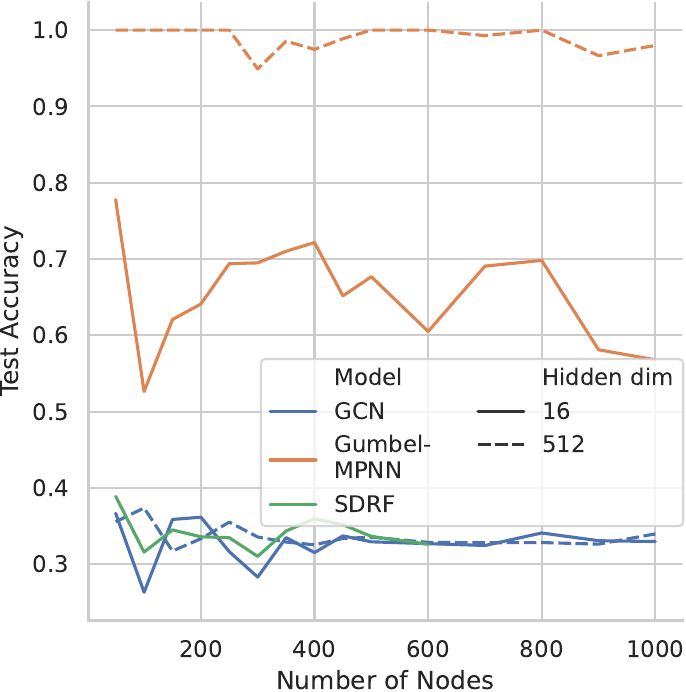}
    \caption{Gumbel-MPNN versus GCN for oversquashing for different values of nodes and hidden dimension.}
    \label{fig:oversquashing}
\end{figure}

\textbf{Edge De-Noising.}
We evaluate the robustness of our model to edge noise. 
The model is trained without artificial edge noise and evaluated on a test set with multiple levels of artificial edge noise.
The noise is added by uniformly sampling a test node $v_i$ and a class $\tilde{c} \sim \mathcal{U}(\{1, \dots,|C|\})$.
We sample a second node $v_j \sim V_{\tilde{c}}$ and add the edge $e_{ij}=(v_i, v_j)$.
This process is repeated for $k \in \{100, 500, 1~000, 10~000, 50~000 \}$ times to investigate multiple noise levels.
We compare our model to a regular GCN, GAT, and GCN with DropEdge. 
Figure~\ref{fig:edgenoise} shows the results on Cora.

\begin{figure}[!ht]
    \centering
    \includegraphics[width=0.4\textwidth]{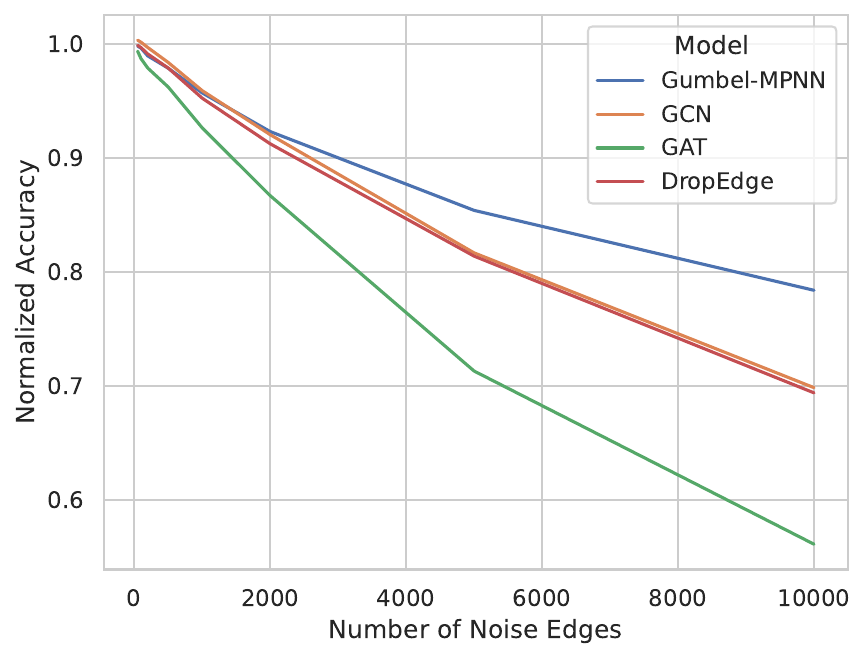}
    \caption{Relative test accuracy depending on noise edges on Cora.}
    \label{fig:edgenoise}
\end{figure}

\section{Discussion}
\label{sec:discussion}

We have demonstrated that our Gumbel-MPNN excels in five challenging tasks for standard MPNNs, including long-range dependencies, edge de-noising, and oversquashing. 
Notably, on node classification benchmarks, Gumbel-MPNN outperforms tuned baselines on 7 out of 12 datasets, validating its effectiveness

\textbf{Node Classification on Benchmark Datasets.}
We have seen that Gumbel-MPNN matches or exceeds the performance of its baselines, confirming that neighborhood consistency is essential for MPNN performance. 
Specifically, Gumbel-MPNN reduces the number of different components in the neighborhood mixture distribution on all datasets.
The only exception is Minesweeper, where grid-based edges lack noise and node features are label-independent, violating the assumption $x_i \sim \mathcal{F}_{c = y_i}$. 
While Gumbel-MPNN harmonizes neighborhood distributions, it can not learn relationships that are absent, such as node features that are independent of labels.
Our results support prior findings that 
MPNN models can perform well on heterophilic datasets~\cite{HomophilyNecessity, strong_baselines}, regardless of architectural specifics.

\textbf{Synthetic Neighborhoods.}
Standard MPNNs struggle with many components in the neighborhood class distribution (c.f. GCN in Figure~\ref{fig:synthetic_result}).
In contrast, we have observed that Gumbel-MPNN can maintain its performance, showing that our proposed rewiring strategy works as expected.
The steepest decrease for both models is from one to two components per class, as supported by Theorem~\ref{thorem:expected_diff}.

\textbf{Adapting Graph Structure from Supervision Signal.}
Our results show that the model successfully adapts the graph structure to fit the new average degree in 9 out of 12 datasets.
However, for Tolokers, the degree decreased, and in some cases, the average degree change was either smaller or larger than the anticipated plus five on the original average degree. 
A possible cause for his behavior is that the model must still satisfy the classification objective, which can sometimes take priority over the degree objective. 
Additionally, the edge model $g_u$, implemented as a Bilinear-MLP, is limited to learning a similarity metric, which may not be sufficiently expressive to capture the required diversity of neighborhood distributions.

\textbf{Long-range Dependencies.}
Through rewiring, Gumbel-MPNN tackles long-range dependency tasks without increasing the number of layers. Specifically, the long-range dependency task is solvable for any model with a sufficient number of layers. For instance, a 4-layer GCN attains a perfect score. 
However, a 2-layer GCN fails to solve the task, whereas a 2-layer Gumbel-MPNN still perfectly solves it. 
Since increasing depth in MPNNs comes with other challenges, \eg oversmoothing~\cite{GCNII}, this is an important advantage of Gumbel-MPNN in settings where long-range dependencies must be considered.

\textbf{Oversquashing.}
Regarding the bottleneck task, we observe that Gumbel-MPNN can solve the oversquashing problem perfectly with a sufficiently high hidden dimension by directly connecting the respective nodes on both sides of the bottleneck.
In contrast, Standard MPNNs cannot encode and bypass the bottleneck for any hidden size.
This shows that disadvantageous connectivity patterns can be alleviated through Gumbel Softmax rewiring.
SDRF can not overcome the bottleneck since it only works on the graph structure, but the task requires considering node feature similarity to connect the right edges.

\textbf{Edge De-Noising.}
Test-time edge noise is a crucial problem for all regular MPNN models. Yet, our results show that Gumbel-MPNN is more robust against edge noise than other models.
Given DropEdge's weak performance, we can conclude that randomly removing edges during training does not help the model deal with edge noise at test time, likely because DropEdge does not add any new edges.
Gumbel-MPNN, in contrast, is able to alleviate the issue by learning to ignore the noise edges.

\textbf{Ablation Study}
We conduct an ablation study on the regularization terms of the loss function.
As discussed in Section~\ref{sec:methods}, we only selected one regularization term per dataset based on pre-experiments. 
For the ablation study, we reduce the loss function to the cross-entropy loss, \ie disregard the regularization term.
The results are presented in Table~\ref{tab:ablation}.
We see that the regularization term can improve the result by a large margin, \eg $7$ points on Chameleon, but never decreases it. 

\begin{table}[!ht]
    \centering
    \caption{Ablation study of the regularization for Gumbel-MPNN.}
    \label{tab:ablation}
     \begin{adjustbox}{width=0.4\textwidth}
    \begin{tabular}{l|rr}
    \toprule
      Dataset &  Gumbel-MPNN  &  Gumbel-MPNN \\
              &  w/o Regularization &  \\
    \midrule
    Cora & $87.91_{0.13}$ & $87.95_{0.10}$ \\
    CiteSeer & $75.55_{0.17}$ & $75.54_{0.17}$ \\
    PubMed & $88.77_{0.04}$ & $88.73_{0.04}$ \\
    OGBn-ArXiv & $70.20_{0.12}$ & $70.44_{0.19}$ \\
    Squirrel & $54.43_{0.10}$ & $54.47_{0.12}$ \\
    Chameleon & $57.26_{0.71}$ & $64.49_{0.25}$ \\
    Actor & $34.92_{0.11}$ & $37.13_{0.10}$ \\
    Roman-Empire & $77.28_{0.28}$ & $82.23_{0.07}$ \\
    Questions & $75.02_{0.27}$ & $76.61_{0.12}$ \\
    Minesweeper & $90.15_{0.26}$ & $91.18_{0.13}$ \\
    Tolokers & $81.41_{0.39}$ & $81.56_{0.41}$ \\
    Amazon-ratings & $49.93_{0.04}$ & $50.08_{0.06}$ \\
    \bottomrule
    \end{tabular}
    \end{adjustbox}
\end{table}

\section{Limitations}
Our work assumes that each class follows a specific feature distribution. 
While this assumption is crucial, it is also the underlying assumption of most machine learning models.
Gumbel-Softmax rewiring relies on estimating pairwise probabilities for edges, guided by the overall supervision signal. 
This can lead to more consistent neighborhood distributions, as our results show.
However, a model that considers the whole neighborhood of a node, \ie all edges of a node's neighborhood at once instead of one edge at a time, could result in more accurate estimates for the probability of adding or removing edges. 
We used a representative set of benchmark datasets, both those with high homophily and high heterophily, and strong baseline models.
We could have compared to models particularly tailored to heterophilic datasets. 
However, following \citet{HomophilyNecessity, strong_baselines},
MPNNs are on par with specialized heterophilic models if tuned properly, as we have also done here.
Generally, although there might be models performing best for specific challenges, \eg robustness to edge noise~\cite{edge_noise_model}, these models fall short on other aspects that we cover, such as capturing long-range dependencies, reducing oversquashing, or handling multiple neighborhood distributions per class.
Thus, overall, we aimed for a solution that addresses those challenges in one model rather than providing a specific solution to a specific aspect.

\section{Conclusion}
\label{sec:conclusion}

Our experiments reinforce the importance of neighborhood consistency.
We show that the number and distance of components in a neighborhood distribution are crucial for MPNN performance. 
By introducing a Gumbel-Softmax-based rewiring method, we 
reduce deviations in neighborhood distributions, mitigate oversquashing, and handle long-range dependencies. 
These findings highlight the significance of adaptive graph structures for MPNNs in node classification.
Future work includes investigating the relationship between rewiring and graph spectra, as well as expanding experiments to graph classification and other domains.
Future work could also further investigate the effect of the temperature parameter in Gumbel softmax sampling for graph rewiring.

\bibliography{ecai_bib}

\newpage

\newpage
\appendix
\onecolumn

\section*{Supplementary Materials}

\section{Theoretical Analysis of Multiple Neighboorhood Distributions}
\label{appendix:theorem 1}

\textbf{Assumptions.}
Given a graph $\gG=(\gV,\gE)$, each node $v_i \in \gV$, has a feature vector $x_i \in  \mathbb{R}^d$ and a label $y_i \in \mathcal{C}$.
We assume that the features of nodes from the same class $c$ are sampled from the same distribution, \ie $x_i \sim \mathcal{F}_{c=y_i}$.
Furthermore, we assume that the neighbors for a node $v_i$ are sampled independently from each other according to some distribution $\mathcal{D}_{l}$, where $l$ is the neighborhood component of the mixture distribution of node $v_i$. 

\begin{lemma}[Jensen's inequality]

Given a random variable $X$ and a convex function $\varphi$, then

\begin{equation}
    \varphi(\mathbb{E}[X]) \leq \mathbb{E}[\varphi(X)]
\end{equation}

\end{lemma}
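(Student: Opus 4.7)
The plan is to prove Jensen's inequality by exploiting the supporting hyperplane characterization of convex functions. The key idea is that at every interior point of the domain of a convex function $\varphi$, there exists an affine minorant that agrees with $\varphi$ at that point; I will anchor such a minorant at the point $\mu := \mathbb{E}[X]$ and then take expectations, which will collapse the affine term and yield the inequality.

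First, I would recall (or establish as a short lemma) the supporting line property: if $\varphi$ is convex, then for any $\mu$ in the interior of its domain there exists a slope $m \in \mathbb{R}$ (a subgradient) such that
\begin{equation*}
    \varphi(x) \geq \varphi(\mu) + m(x - \mu) \qquad \text{for all } x.
\end{equation*}
This follows because the one-sided derivatives of a convex function exist and any value between the left and right derivatives at $\mu$ serves as a valid slope. Setting $\mu = \mathbb{E}[X]$, which is finite by assumption, the inequality becomes a pointwise lower bound on $\varphi$ that is easy to integrate.

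Next, I would substitute the random variable $X$ into this pointwise bound to obtain
\begin{equation*}
    \varphi(X) \geq \varphi(\mathbb{E}[X]) + m\bigl(X - \mathbb{E}[X]\bigr),
\end{equation*}
and then take expectations on both sides. By linearity of expectation, $\mathbb{E}[m(X - \mathbb{E}[X])] = m(\mathbb{E}[X] - \mathbb{E}[X]) = 0$, so the right-hand side reduces to $\varphi(\mathbb{E}[X])$, giving the desired conclusion $\mathbb{E}[\varphi(X)] \geq \varphi(\mathbb{E}[X])$.

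The main obstacle is not the algebra but the justification of the supporting-line property and the attendant measurability/integrability conditions: one must verify that $\mathbb{E}[X]$ lies in the (interior of the) domain of $\varphi$ so that a subgradient exists, and that $\varphi(X)$ is measurable and its expectation is well defined (possibly $+\infty$). For the use in Theorem~\ref{thorem:expected_diff}, where $\varphi$ is the norm (a convex, continuous, everywhere-finite function on $\mathbb{R}^d$), these conditions are trivially satisfied, so I would remark on this briefly rather than develop the full measure-theoretic machinery.
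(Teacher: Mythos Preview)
Your proof is correct and is in fact the standard supporting-hyperplane argument for Jensen's inequality. However, there is nothing to compare it against: the paper does not prove this lemma at all. It is stated as a classical result (Lemma~1 in Appendix~\ref{appendix:theorem 1}) and then invoked in the proof of Theorem~\ref{thorem:expected_diff}. So you have supplied a proof where the paper simply cites the result.
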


\begin{lemma}

Given two matrices $A$ and $B$, the norm of the matrix product can be lower bounded by

\begin{equation}
    \norm{AB}_2 \geq \sigma_{min}(A)\norm{B}_2
\end{equation}
where $\sigma_{min}(A)$ is the minimal singular value of $A$.
    
\end{lemma}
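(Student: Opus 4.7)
The plan is to prove the inequality directly from the singular value decomposition of $A$ together with the variational definition of the spectral norm. First I would establish the pointwise inequality $\|Ay\|_2 \ge \sigma_{\min}(A)\,\|y\|_2$ for every vector $y$, and then lift it to a matrix inequality by plugging in $y = Bx$ and taking a supremum over unit vectors $x$.

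For the pointwise step, write $A = U\Sigma V^{\top}$ using the SVD. Since $U$ and $V$ are orthogonal, $\|Ay\|_2 = \|\Sigma V^{\top} y\|_2$, and setting $z = V^{\top}y$ preserves the norm: $\|z\|_2 = \|y\|_2$. Then
\begin{equation*}
\|\Sigma z\|_2^2 \;=\; \sum_i \sigma_i^2 z_i^2 \;\ge\; \sigma_{\min}(A)^2 \sum_i z_i^2 \;=\; \sigma_{\min}(A)^2 \,\|y\|_2^2,
\end{equation*}
which yields the claimed pointwise bound after taking square roots. For the second step, recall that $\|M\|_2 = \max_{\|x\|_2 = 1} \|Mx\|_2$. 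Applying the pointwise bound with $y = Bx$ gives $\|ABx\|_2 \ge \sigma_{\min}(A)\,\|Bx\|_2$ for every unit $x$, and taking the maximum over $\{x : \|x\|_2 = 1\}$ on both sides yields
\begin{equation*}
\|AB\|_2 \;\ge\; \sigma_{\min}(A)\,\max_{\|x\|_2 = 1} \|Bx\|_2 \;=\; \sigma_{\min}(A)\,\|B\|_2,
\end{equation*}
which is exactly the lemma.

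The main subtlety — and the only potentially awkward point — is dimensional: the pointwise bound $\|Ay\|_2 \ge \sigma_{\min}(A)\|y\|_2$ is clean when $A$ is square or tall (more rows than columns), in which case $\sigma_{\min}(A)$ denotes the smallest of the singular values and the sum over $i$ above runs over the full range of coordinates of $z$. For a wide $A$, the map has a nontrivial kernel, so the inequality can trivially fail unless $\sigma_{\min}$ is interpreted as zero. I would therefore state the lemma with the implicit convention matching the application (the weight matrix $W$ in Theorem~\ref{thorem:expected_diff}) and note the convention explicitly, so that the pointwise step goes through without issue.
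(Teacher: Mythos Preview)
Your proof is correct and standard. The paper, however, does not actually prove this lemma: it is stated in the appendix alongside Jensen's inequality as a known linear-algebra fact, without any argument, and then invoked in the last step of the proof of Theorem~\ref{thorem:expected_diff}. So there is nothing to compare your argument against; you have simply filled in a detail the paper leaves implicit.

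Your SVD-based pointwise bound followed by the variational characterization of the spectral norm is exactly the standard route, and your remark about the dimensional caveat (wide $A$ forcing $\sigma_{\min}(A)=0$) is well placed --- in the paper's application $B$ is a vector and $A=W$ is the learnable weight matrix, so the relevant inequality is really just the pointwise bound $\|Wy\|_2 \ge \sigma_{\min}(W)\|y\|_2$, which your first step already covers.
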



\begin{theorem}
    \label{app:thorem:expected_diff}
    Consider a graph $\gG=(\gV, \gE)$, with class-specific feature distributions $\{\mathcal{F}_c, c \in C\}$, and discrete neighborhood distributions $\{\mathcal{D}_l, l \in [k_c]\})$ for each class, fulfilling the assumptions above.
    Then for two nodes $v_i, v_j \in V$, with the same class $y_i = y_j$ and different discrete neighborhood distributions $\mathcal{D}_{p_i} \neq \mathcal{D}_{p_j}$ the expected distance between their MPNN embeddings $h_i$, $h_j$ is lower bounded by the distance of the means of the neighborhood distribution components:
    \begin{align*}
        &\mathbb{E}\big[\norm{h_i - h_j}\big] \geq 
        \sigma_{min}(W) \norm{(\mathbb{E}_{
        x \sim (\mathcal{F}_c | c \sim D_{p_i})}[x]
        - \mathbb{E}_{
        x \sim (\mathcal{F}_c | c \sim D_{p_j})}[x]}
    \end{align*}
where $\sigma_{min}(W)$ denotes the smallest singular value of the learnable weight matrix $W$.

\end{theorem}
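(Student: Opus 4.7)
The plan is to apply Jensen's inequality to pass the expectation inside the norm, then exploit the MPNN aggregation rule together with the i.i.d.\ assumption on neighbors to reduce everything to a statement about the means of the two neighborhood mixture components, and finish with the singular-value inequality for matrices.

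First, I would write
\begin{equation*}
\mathbb{E}\bigl[\norm{h_i - h_j}\bigr] \;\geq\; \norm{\mathbb{E}[h_i - h_j]} \;=\; \norm{\mathbb{E}[h_i] - \mathbb{E}[h_j]},
\end{equation*}
using Jensen's inequality applied to the convex function $\varphi(z) = \norm{z}$ and linearity of expectation. This is the easy but essential step that removes the randomness from the norm and lets us work directly with the mean embeddings.

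Next, I would unfold each embedding through the MPNN aggregation. Writing $h_i = \tfrac{1}{\deg(i)} \sum_{k \in N(i)} W x_k$ and pulling $W$ and $1/\deg(i)$ out, every summand has the same distribution because neighbors of $v_i$ are sampled independently from $\mathcal{D}_{p_i}$ and, conditionally on the sampled class, features come from $\mathcal{F}_c$. Thus
\begin{equation*}
\mathbb{E}[h_i] \;=\; W\,\mathbb{E}_{x \sim (\mathcal{F}_c\mid c \sim \mathcal{D}_{p_i})}[x],
\end{equation*}
and similarly for $\mathbb{E}[h_j]$. The degrees cancel because the mean of each summand no longer depends on which neighbor we picked. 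Subtracting gives $\mathbb{E}[h_i]-\mathbb{E}[h_j] = W\bigl(\mu_{p_i} - \mu_{p_j}\bigr)$ where $\mu_{p_\bullet}$ denotes the respective mixture-component mean.

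Finally, I would apply the standard inequality $\norm{W v} \geq \sigma_{\min}(W)\,\norm{v}$ (an immediate consequence of the SVD of $W$) to the vector $v = \mu_{p_i} - \mu_{p_j}$, yielding the claimed bound. The main obstacle is not any single step but rather making the conditional-expectation bookkeeping precise: one must justify exchanging the summation over neighbors with the expectation, invoke independence of the sampled neighbors to get identical summand distributions, and correctly use the two-stage sampling (class from $\mathcal{D}_{p_\bullet}$, then feature from $\mathcal{F}_c$) so that each $\mathbb{E}[W x_k]$ collapses to a single vector independent of $k$. Once this is set up cleanly, Jensen's inequality and the singular-value bound are routine.
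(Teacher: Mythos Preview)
Your proposal is correct and follows essentially the same route as the paper's proof: Jensen's inequality to move the expectation inside the norm, unfolding the mean-aggregation so that the i.i.d.\ neighbor assumption collapses each embedding's expectation to $W$ times the mixture-component mean, and then the singular-value lower bound $\norm{Wv}\ge\sigma_{\min}(W)\norm{v}$. If anything, your write-up is slightly more explicit about why the degree factors cancel and why each summand shares the same distribution.
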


\begin{proof}

By using Jensens inequality (see Lemma 1), we get: 
\begin{equation*}
    \mathbb{E}[\norm{h_i - h_j}] \geq \norm{\mathbb{E}[h_i - h_j]} = \norm{\mathbb{E}[h_i] - \mathbb{E}[h_j]} 
\end{equation*}
With the aggregation mechanism of the MPNN, we express the expectation of the embeddings in terms of the expectation of the neighbors and features:
\begin{align*}
    &= \norm{\mathbb{E}[\sum_{k \in N(i)} \frac{1}{deg(i)} W x_k] - \mathbb{E}[\sum_{l \in N(j)} \frac{1}{deg(j)} W x_l]} \\
    &= \norm{W (\frac{1}{deg(i)} \sum_{k \in N(i)} \mathbb{E}_{x_k \sim \mathcal{F}_c , c \sim D_{p_i}}[x_k] - \frac{1}{deg(j)} \sum_{l \in N(j)} \mathbb{E}_{x_l \sim \mathcal{F}_c , c \sim D_{p_j}}[x_l])} \\
    &=\norm{W (\mathbb{E}_{x \sim \mathcal{F}_c , c \sim D_{p_i}}[x] - \mathbb{E}_{x \sim \mathcal{F}_c , c \sim D_{p_j}}[x])} \\
    &\geq \sigma_{min}(W) \norm{\mathbb{E}_{x \sim \mathcal{F}_c , c \sim D_{p_i}}[x] - \mathbb{E}_{x \sim \mathcal{F}_c , c \sim D_{p_j}}[x]} \\
\end{align*}

Note that we used Lemma 2 for the last step and $\sigma_{min}$ is the smallest singular value of $W$.

\end{proof}


\section{Homophily Measures}
\label{app:hom}
We present the homophily and related measures used throughout this paper. 
To measure homophily and neighborhood consistency, we use three different measures, edge homophily, adjusted homophily and Label Informativeness (LI).

Edge homophily~\cite{H2GNN} counts the ratio of edged between nodes of the same class to all edges in the graph.
It is defined by 

\begin{equation}
    h_{edge} := \frac{|\{(u,v) : (u,v) \in \gE, y_u=y_v \}|}{|\gE|},
\end{equation}
where $y_u$ is the class of node $u$ and $y_v$ the class node $v$.

Adjusted homophily~\cite{LabelInformativeness} improves edge homophily my considering the number of classes, which makes the measure comparable across different datasets.
It is defined by

\begin{equation}
    h_{adj} := \frac{h_{edge} - \sum_{c=1}^C \overline{p}(c)^2}{1 - \sum_{c=1}^C \overline{p}(c)^2},
\end{equation}
where $h_{edge}$ is the edge homophily, and $\overline{p}(\cdot)$ is the degree weighted distribution of classes, \ie $\overline{p}(c) = \frac{\sum_{v:y_v = c} d(v)}{2|\gE|}$, where $d(v)$ is the degree of $v$.

Both, edge homophily and adjusted homophily measure only how nodes of the same classes are connected to each other, but ignore the similarity of neighborhood distributions of classes. 
Label informativness solves this issue by measuring the information a label of a node provides about the label of its neighbor

\begin{equation}
    LI := I(y_v, y_w)/H(y_v) = - \frac{
    \sum_{c_1, c_2} p(c_1, c_2) \log(\frac{p(c_1, c_2)}{ \overline{p}(c_1)  \overline{p}(c_2)})
    }{
    \sum_c  \overline{p}(c) \log( \overline{p}(c))
    },
\end{equation}

where $I(y_v, y_w)$ is the mutal information between the label of node $v$ and $w$ and $p(c_1, c_2) = \sum_{(v,w) \in E} \frac{\mathbbm{1}\{y_v = c_1, y_w=c_2\}}{|E|}$.

\section{Gradient Estimation of the Loss}
\label{app:details_gumbel}
We present the details on computing the gradient with respect to the parameters of the Message Passing Neural Network and the edge model given our loss function:

\begin{equation*}
    L(A(G), X, y;w,u) = \mathbb{E}_{\tilde{A} \sim p_{\mathbf{ \bm{\theta}}}}[l(f_w (\tilde{A}, X), y)], \text{ with } \bm{\theta} = g_u(X,E)
\end{equation*}

The gradient with respect to the parameter $w$ is given by chain rule with:

\begin{equation*}
    \nabla_w L(A(G), X, y;w,u) = \mathbb{E}_{\tilde{A} \sim p_{\mathbf{ \bm{\theta}}}} [\partial_w f_w^T (\tilde{A}, X) \nabla_{\hat{y}} l(\hat{y}, y)].
\end{equation*}

The gradient for the edge-model $g_u$ is: 

\begin{equation*}
    \nabla_u L(A(G), X, y;w,u) = \partial_u g_u(A(G), X)^T \nabla_{\bm{\theta}} L(A(G), X, y;w,u),
\end{equation*}
where $\nabla_{\bm{\theta}}L(A(G), X, y;w,u) = \nabla_{\bm{\theta}}\mathbb{E}_{\tilde{A} \sim p_{\mathbf{ \bm{\theta}}}}[l(f_w (\tilde{A}, X), y)], \text{ with } \bm{\theta} = g_u(X,E)$ is the loss gradient w.r.t. $\bm{\theta}$.
To compute the gradient one needs to draw Monte-Carlo samples from $p_{\bm{\theta}}$. 
However, the discrete sampling prevents the application of backpropagation to compute the gradient.
One possibility is to apply the score function estimator or REINFORCE estimator~\cite{reinforce}. 
In our setting, where $\bm{\theta}$ is Bernoulli distributed, we can use the Gumbel-Softmax estimator~\cite{gumbel}, which gives us a differentiable model by reparametrization that can be annealed into the final Bernoulli distribution.

\section{Real-World Graph Results with Standard Deviation}
\label{app:real_world_results_with_std}
The results for real-world graph datasets with standard deviations can be found in Table~\ref{tab:results_real-world_with_std}.

\renewcommand{\mytextsubscript}[1]{{\color{black}~\textsubscript{#1}}}

\begin{table*}[!ht]
    \centering
    \caption{Node classification performance on the homophilic and heterophilic real-world datasets with standard deviations.} 
     \begin{adjustbox}{width=\textwidth}
     
    \begin{tabular}{l|cccccccccccc}
    
    \toprule 
                   & Cora & CiteSeer & PubMed & OGBn-ArXiv &Squirrel & Chameleon & Actor & Roman-Empire & Questions & Minesweeper & Tolokers & Amazon-ratings\\
    \midrule
     MLP  & $75.61\mytextsubscript{0.20}$ & $72.89\mytextsubscript{0.10}$ & $87.30\mytextsubscript{0.04}$ &$55.06\mytextsubscript{0.09}$ & $36.17\mytextsubscript{0.15}$ & $49.62\mytextsubscript{0.24}$ & $36.50\mytextsubscript{0.13}$ & $65.88\mytextsubscript{0.04}$ & $70.82\mytextsubscript{0.10}$ & $50.38\mytextsubscript{0.12}$ & $73.62\mytextsubscript{0.20}$ & $46.68\mytextsubscript{0.13}$\\
     GCN & $ 87.47\mytextsubscript{0.12}$ & $76.10\mytextsubscript{0.15}$ & $88.07\mytextsubscript{0.04}$ & $71.27\mytextsubscript{0.17}$ & $\textbf{55.80}\mytextsubscript{0.13}$  & $63.70\mytextsubscript{0.21}$ & $35.21\mytextsubscript{0.07}$ & $78.82\mytextsubscript{0.09}$ & $76.07\mytextsubscript{0.24}$ & $91.50\mytextsubscript{0.06}$ & $80.75\mytextsubscript{0.12}$ & $49.65\mytextsubscript{0.13}$ \\
     GraphSAGE & $87.89\mytextsubscript{0.13}$ & $76.12\mytextsubscript{0.14} $ & $88.22\mytextsubscript{0.04}$ & $69.65\mytextsubscript{0.27}$ & $42.46\mytextsubscript{0.14}$  & $62.99\mytextsubscript{0.18}$ & $36.79\mytextsubscript{0.10}$ & $81.78\mytextsubscript{0.06}$ & $75.38\mytextsubscript{0.09}$ & $\textbf{93.58}\mytextsubscript{0.05}$ & $80.16\mytextsubscript{0.12}$ & $50.61\mytextsubscript{0.10}$ \\
     GAT & $87.49\mytextsubscript{0.15}$ & $\textbf{76.43}\mytextsubscript{0.10}$ & $88.51\mytextsubscript{0.04}$ & $71.97\mytextsubscript{0.12}$ & $55.37\mytextsubscript{0.29}$  & $63.17\mytextsubscript{0.20}$  & $34.91\mytextsubscript{0.12}$ & $81.45\mytextsubscript{0.09}$ & $72.54\mytextsubscript{0.10}$ & $92.38\mytextsubscript{0.09}$ & $80.99\mytextsubscript{0.17}$ & $\textbf{50.56}\mytextsubscript{0.06}$\\
     GCN with DropEdge  & $87.93\mytextsubscript{0.13}$ & $75.99\mytextsubscript{0.15}$ & $88.64\mytextsubscript{0.03}$ & $71.75\mytextsubscript{0.16}$ & $53.59\mytextsubscript{0.17}$ &  $63.51\mytextsubscript{0.26}$ & $33.59\mytextsubscript{0.10}$ & $78.26\mytextsubscript{0.11}$ & $55.92\mytextsubscript{0.11}$ & $89.95\mytextsubscript{0.07}$ & $81.49\mytextsubscript{0.11}$  & $49.84\mytextsubscript{0.13}$ \\
     SDRF with GCN  & $87.88\mytextsubscript{0.13}$ & $76.12\mytextsubscript{0.14}$ & $88.05\mytextsubscript{0.04}$ & OOT  & $55.66\mytextsubscript{0.13}$  & $35.88\mytextsubscript{0.15}$ & $35.15\mytextsubscript{0.05}$ & $80.60\mytextsubscript{0.07}$ & $57.16\mytextsubscript{0.14}$ & $89.93\mytextsubscript{0.06}$ & $80.90\mytextsubscript{0.11}$ & $44.03\mytextsubscript{0.18}$
 \\
     \hline
     \textit{Gumbel-MPNN (ours)}  & $\textbf{87.95}\mytextsubscript{0.10}$  & $75.54\mytextsubscript{0.17}$ & $\textbf{88.73}\mytextsubscript{0.04}$ & $70.44\mytextsubscript{0.19}$ &$\textit{54.47}\mytextsubscript{0.12}$ & $\textbf{64.49}\mytextsubscript{0.25}$  & $\textbf{37.13}\mytextsubscript{0.1}$ & $\textbf{82.23}\mytextsubscript{0.07}$
 & $\textbf{76.61}\mytextsubscript{0.12}$  & $91.18\mytextsubscript{0.13}$  & $\textbf{81.56}\mytextsubscript{0.41}$  & $50.08\mytextsubscript{0.06}$ \\       
     
     \bottomrule
    \end{tabular}
    \end{adjustbox}
    
    \label{tab:results_real-world_with_std}
\end{table*}

\section{Ablation Study}

We conduct an ablation study on the regularization terms of the loss function.
As discussed in Section~\ref{sec:methods}, we only selected one regularization term per dataset based on pre-experiments. 
For the ablation study, we reduce the loss function to the cross entropy loss.
The results are presented in Table~\ref{tab:ablation}.

\begin{table}[!ht]
    \centering
    \caption{Ablation study of the regularization term for Gumbel-MPNN.}
    \label{tab:ablation}
     \begin{adjustbox}{width=\textwidth}
    \begin{tabular}{c|cccccccccccc}
    \toprule
       & Cora & CiteSeer & PubMed & OGBn-ArXiv &Squirrel & Chameleon & Actor & Roman-Empire & Questions & Minesweeper & Tolokers & Amazon-ratings \\
    \midrule
     G.-MPNN w.o. Reg.   & $87.91_{0.13}$ & $75.55_{0.17}$ & $88.77_{0.04}$ & & $54.43_{0.1}$ & $57.26_{0.71}$ & $34.92_{0.11}$ & $77.28_{0.28}$ & $75.02_{0.27}$ & $90.15_{0.26}$ & $81.41_{0.39}$ & $49.93_{0.04}$\\ 
     Gumbel-MPNN    & $87.95\mytextsubscript{0.10}$  & $75.54\mytextsubscript{0.17}$ & $88.73\mytextsubscript{0.04}$ & $70.44\mytextsubscript{0.19}$ &$\textit{54.47}\mytextsubscript{0.12}$ & $64.49\mytextsubscript{0.25}$  & $37.13\mytextsubscript{0.1}$ & $82.23\mytextsubscript{0.07}$
 & $76.61\mytextsubscript{0.12}$  & $91.18\mytextsubscript{0.13}$  & $81.56\mytextsubscript{0.41}$  & $50.08\mytextsubscript{0.06}$  \\
    \bottomrule
    \end{tabular}
    \end{adjustbox}
\end{table}

\section{Squirrel and Chameleon Filtered}
\label{app:squirrel_chameleon}
Since the work of \citet{HeteroPhilDSNew} the use of the original Squirrel and Chameleon dataset is highly disputed, since it has been recognized that there nodes with identical features and class in training and test set. 
\citet{HeteroPhilDSNew} argue that these duplicates are an error in the preprocessing. 
However, it is also possible that connected nodes in a wikipedia network do share the same subset of nouns and, therefore, the same set of node features.
Nevertheless, it could be a train-test leakage, which is not too problematic, as long as these dataset are not the only datasets used for evaluation, it measures a models capability to memorize a subset of the training set, which can also be an important property.
This being said, we also compare our models on the filtered version provided by \citet{HeteroPhilDSNew}, where all duplicates have been removed. 
The result is presented in Table~\ref{tab:filtered_ds}.

\begin{table}[!ht]
    \centering
    \caption{Node classification on the Squirrel and Chameleon dataset with filtered duplicates.}

    \begin{adjustbox}{width=\textwidth}

    \begin{tabular}{l|ccccccccc}
     \toprule
      & MLP & GCN & GraphSage & GAT & GCN with DropEdge & SDRF with GCN & Gumbel MPNN\\
    \midrule
     Chameleon-unfiltered    & $49.62\mytextsubscript{0.24}$ & $63.70\mytextsubscript{0.21}$ & $62.99\mytextsubscript{0.18}$ & $63.17\mytextsubscript{0.20}$ & $63.51\mytextsubscript{0.26}$ & $63.82\mytextsubscript{0.06}$ & $\textbf{64.49}\mytextsubscript{0.25}$ \\
     
      Squirrel-unfiltered    & $36.17\mytextsubscript{0.15}$ & $55.80\mytextsubscript{0.13}$ & $42.46\mytextsubscript{0.14}$ & $55.37\mytextsubscript{0.29}$ & $53.59\mytextsubscript{0.17}$ & $55.66\mytextsubscript{0.13}$ & $\textit{54.47}\mytextsubscript{0.12}$ \\
     \bottomrule
    \end{tabular}
    \end{adjustbox}

    \label{tab:filtered_ds}
\end{table}

In contrast to the unfiltered Squirrel and Chameleon dataset, we observe that MLP is the strongest Method on Squirrel.
This means that the benefit of the duplicates is mostly a result of the neighbor aggregation, beside that the neighbors do not add additional information. 
For Chameleon, the ranking is similar to the unfiltered dataset, although all models decreased about $20\%$ in accuracy.

\section{Neighborhood Distribution Clustering by Gaussian Mixture Models}
\label{app:GMM_Clust}

In this section, we describe the clustering of the neighborhood distributions.
We calculate the empirical 1-hop distribution $p_i=\hat{p}_{N(v_i)}$ of each node $v_i$.
All $p_i$ are clustered separated by classes with a Gaussian Mixture Model (GMM).
We fit and evaluate number of components from $k \in \{1, \dots, 25\}$, \ie each class is separated into $1$ to $25$ classes based on the components found by the GMM in the neighborhood distribution. 
For each class, we select the $k_c^*$ that maximized the Bayesian Information Criteria (BIC) for the model. 
The overall new number of classes per dataset is given in Table~\ref{tab:distribution_comparison}.
We trained MPNNs on the new set of broken up classes and mapped the result back to original class, which did non improve the model performance, as explained in Section~\ref{sec:NeighDistImpact}.

The Procedure is presented in the following algorithm. 

\begin{algorithm}
\caption{Gaussian Mixture Model Class Decomposition}
\begin{algorithmic}
\REQUIRE Graph $\gG=(\gV,\gE)$, set of classes $\gC$, maximum number of components per class $k_c$

\FOR{$c = 0$ to $|\gC|-1$}
    \FOR{$l = 0$ to $k_c$}
      \STATE $P_i^c \leftarrow$ Set of empirical 1-hop distribution for each node for class $c$
      \STATE $BIC_c^l, \tilde{Y}_c^l \leftarrow$ GMM($P_i^c$, $l$)
    \ENDFOR
    \STATE $l^* \leftarrow \argmax_l(BIC_c^l)$
\ENDFOR

\STATE \textbf{Output:} $[Y_1^{l^*}, \dots, Y_{|\gC| - 1}^{l^*}]$
\end{algorithmic}
\end{algorithm}

\section{Datasets}
\label{app:dataset}
For our real-world dataset, we use the homophilic Planetoid citation graphs Cora, CiteSeer~\cite{CoraCiteSeer}, and PubMed~\cite{PubMed}, and ogbn-arxiv from the ogb benchmark~\cite{ogb-benchmarks}.
We use the random Planetoid split suggested by \citet{PitfallsGNNEval}, where we randomly sample 20 nodes per class for training, 30 nodes per class for validation and all other for testing. 
For ogbn-arxiv, we use the default train, validation test split, where we train on all nodes up to year $2017$, validate on all nodes from year $2018$, and test on all nodes from year $2019$.
As the heterophilic datasets, we use the Wikipedia graphs Squirrel, Chameleon~\citet{SquirrelChameleon}, and Actor~\cite{GeomGNN} with the $10$ default splits per dataset~\cite{GeomGNN}.
Additionally, we include the recent heteophilic benchmark datasets proposed by \citet{HeteroPhilDSNew} consisting of Minesweeper, Roman-Empire, Amazon-ratings, Tolokers, and Questions. 
For descriptive numbers of the dataset, see Table~\ref{tab:ds}.

\begin{table}[!ht]
    \centering
    \caption{The number of nodes $|V|$, the number of edges $E$, the feature dimension $d$, the number of classes $|\mathcal{C}|$, and the edge homophily, adjusted homophily, and Label Informativness (LI).}. 
     \begin{adjustbox}{width=0.6\textwidth}
    \begin{tabular}{l|rrrr|rrr}
    \toprule
    Dataset & $|V|$ & $|E|$ & $d$ & $|\mathcal{C}|$ & $h_{edge}$ & $h_{adj}$ & $LI_\text{edge}$ \\
    \midrule
    Cora & $2,708$ & $5,278$ & $1,433$ & $7$ & $0.81$ & $0.77$ & $0.59$ \\
    CiteSeer & $3,327$ & $4,552$ & $3,703$ & $6$ &$0.74$ & $0.67$ & $0.45$ \\
    PubMed & $19,717$ & $44,324$ & $500$ & $3$ &$0.80$ & $0.69$ & $0.41$ \\
    ogbn-arxiv & $169,343$ & $1,166,243$ & $128$ & $40$ &$0.66$ & $0.59$ & $0.46$\\
    \midrule
    Squirrel & $5201$ & $217073$ & $2089$ & $5$ & $0.22$ & $0.01$ & $0.00$ \\
    Chameleon & $2277$ & $36101$ & $2325$ & $5$ &$0.24$ & $0.04$ & $0.04$  \\
    Actor & $7,600$ & $30,019$ & $932$ & $5$ & $0.22$ & $0.01$ & $0.00$ \\
    Roman-Empire & $22,662$ & $32,927$ & $300$ & $18$ &$0.05$ & $-0.05$ & $0.11$  \\
    Questions & $48,921$ & $153,540$ & $301$ & $2$ &$0.84$ & $0.02$ & $0.00$  \\
    Minesweeper & $10,000$ & $39,402$ & $7$ & $2$ &$0.68$ & $0.01$ & $0.00$  \\
    Tolokers & $11,758$ & $519,000$ & $10$ & $2$ &$0.59$ & $0.09$ & $0.01$  \\
    Amazon-ratings & $24,492$ & $93,050$ & $300$ & $5$ & $0.38$ & $0.14$ & $0.03$\\
    \bottomrule
    \end{tabular}
    \end{adjustbox}
    
    \label{tab:ds}
\end{table}

\section{Hyperparameter}
\label{app:hyperparam}

We tune the hyperparameters for each model and each dataset on the regular node classification task.
For the other experiments, we reuse these parameters.
All models are optimized with Adam~\cite{adam} and a learning rate of $0.001$.
We tune for all models the hidden dimension, the use of residual connections, and whether the graph is undirected or directed. 
The final hyperparameters are presented in Table~\ref{tab:hp_mlp}, Table~\ref{tab:hp_gcn}, Table~\ref{tab:hp_gat} and Table~\ref{tab:hp_sage}.

\begin{table}[!ht]
    \centering
    \caption{Final hyperparameters of MLP for each dataset.}

    \begin{adjustbox}{width=\textwidth}

    \begin{tabular}{l|rrrrrrrrrrrr}
    \toprule
         &  Cora & CiteSeer & PubMed & OGBn-ArXiv & Squirrel & Chameleon & Actor & Roman-Empire & Questions & Minesweeper & Tolokers & Amazon-ratings\\
     \midrule
     Hidden dim  & $1024$ & $2048$ & $1024$ & $1024$ & $256$ & $1024$ & $2048$ & $128$ & $1024$ & $1024$ & $128$ & $256$\\
     Layers  & $1$ & $1$ & $2$ & $1$ & $3$ & $5$ & $4$ & $5$ & $3$ & $1$  & $3$ & $2$\\
     Dropout  & $0.8$ & $0.9$ & $0.8$ & $0.1$ & $0.3$ & $0.0$ & $0.0$ & $0.1$ & $0.8$ & $0.4$ & $0.2$ & $0.1$\\
     Layernorm & yes & yes & no & yes & no & yes & yes & yes & yes & no  & yes & yes\\
     Residual  & yes & yes & yes & yes & yes & yes & yes & yes & no & yes  & no & yes\\
     \bottomrule
    \end{tabular}
    \end{adjustbox}
    \label{tab:hp_mlp}
\end{table}

\begin{table}[!ht]
    \centering
    \begin{adjustbox}{width=\textwidth}
    \begin{tabular}{l|rrrrrrrrrrrr}
    \toprule
         &  Cora & CiteSeer & PubMed & OGBn-ArXiv & Squirrel & Chameleon & Actor & Roman-Empire & Questions & Minesweeper & Tolokers & Amazon-ratings\\
     \midrule
     Hidden dim  & $256$ & $2048$ & $256$ & $1024$ & $128$ & $2048$ & $128$ & $128$ & $512$ & $256$ & $128$ & $1024$\\
     Layers  & $2$ & $2$ & $3$ & $5$ & $5$ & $2$ & $1$ & $2$ & $5$ & $4$  & $3$ & $5$\\
     Dropout  & $0.1$ & $0.9$ & $0.3$ & $0.2$ & $0.2$ & $0.1$ & $0.8$ & $0.2$ & $0.8$ & $0.1$ & $0.4$ & $0.2$\\
     Layernorm & yes & yes & yes & yes & yes & no & no & no & no & yes  & yes & yes\\
     Residual  & no  & no & yes & yes & yes & no & yes & yes & yes & yes  & yes & yes\\
    Undirected  & yes & yes & yes & yes & yes & yes & yes & no & yes & yes  & no & no\\
     \bottomrule
    \end{tabular}
    \end{adjustbox}
    \caption{Final hyperparameters of GCN for each dataset.}
    \label{tab:hp_gcn}
\end{table}

\begin{table}[!ht]
    \centering
    \caption{Final hyperparameters of GraphSAGE for each dataset.}

    \begin{adjustbox}{width=\textwidth}
    \begin{tabular}{l|rrrrrrrrrrrr}
    \toprule
         &  Cora & CiteSeer & PubMed & OGBn-ArXiv & Squirrel & Chameleon & Actor & Roman-Empire & Questions & Minesweeper & Tolokers & Amazon-ratings\\
     \midrule
     Hidden dim  & $512$ & $256$ & $1024$ & $1024$ & $1024$ & $2048$ & $256$ & $1024$ & $512$ & $2048$ & $2048$ & $512$\\
     Layers  & $2$ & $2$ & $2$ & $2$ & $5$ & $4$ & $2$ & $2$ & $5$ & $5$  & $2$ & $3$\\
     Dropout  & $0.8$ & $0.2$ & $0.2$ & $0.2$ & $0.7$ & $0.8$ & $0.2$ & $0.2$ & $0.8$ & $0.8$ & $0.7$ & $0.7$\\
     Layernorm & yes & yes & no & yes & yes & yes & no & no & yes & yes  & yes & yes\\
     Residual  & no  & no & no & yes & yes & yes & no & np & yes & yes  & yes & yes\\
    Undirected  & yes & yes & yes & yes & yes & yes & no & no & yes & yes  & no & yes\\
     \bottomrule
    \end{tabular}
    \end{adjustbox}
    \label{tab:hp_sage}
\end{table}

\begin{table}[!ht]
    \centering
    \caption{Final hyperparameters of GAT for each dataset.}

    \begin{adjustbox}{width=\textwidth}
    \begin{tabular}{l|rrrrrrrrrrrr}
    \toprule
         &  Cora & CiteSeer & PubMed & OGBn-ArXiv & Squirrel & Chameleon & Actor & Roman-Empire & Questions & Minesweeper & Tolokers & Amazon-ratings\\
     \midrule
     Hidden dim  & $512$ & $512$ & $128$ & $256$ & $256$ & $64$ & $256$ & $128$ & $128$ & $32$ & $128$ & $256$\\
     Num. Heads & $1$ & $1$ & $16$ & $4$ & 1 & 8 & 1 & 8 & 1 & 2 & 8 & 1\\
     Layers  & $3$ & $1$ & $3$ & $5$ & $3$ & $4$ & $3$ & $3$ & $3$ & $5$  & $4$ & $3$\\
     Dropout  & $0.5$ & $0.8$ & $0.7$ & $0.3$ & $0.4$ & $0.2$ & $0.8$ & $0.4$ & $0.5$ & $0.1$ & $0.2$ & $0.2$\\
     Layernorm & yes & no & yes & yes & no & no & no & yes & no & no  & yes & yes\\
     Residual  & no  & yes & yes & yes & no & yes & yes & yes & yes & yes  & yes & yes\\
    Undirected  & yes & yes & yes & yes & yes & yes & yes & yes & no & yes  & yes & yes\\
     \bottomrule
    \end{tabular}
    \end{adjustbox}
    \label{tab:hp_gat}
\end{table}

For the Gumbel-Softmax approach, we conducted pre-experiments to choose the candidate selection strategy and the regularization term. 
The result can be seen in Table~\ref{tab:hp_cand}.

\begin{table}[!ht]
    \centering
    \caption{Candidate selection strategy and regularization term for the Gumbel-Softmax approach for each dataset.}
    \begin{adjustbox}{width=\textwidth}
    \begin{tabular}{l|rrrrrrrrrrrr}
    \toprule
         &  Cora & CiteSeer & PubMed & OGBn-ArXiv & Squirrel & Chameleon & Actor & Roman-Empire & Questions & Minesweeper & Tolokers & Amazon-ratings\\
     \midrule
     Candidate sel.  & global & local & 2-hop & global & 2-hop & global & local & 2-hop & global & ricci & ricci & 2-hop\\
     Reg. term       & lc & mix & mix & pc & pc & - & mix & dc & pc & pc  & none & mix\\
     \bottomrule
    \end{tabular}
    \end{adjustbox}

    \label{tab:hp_cand}
\end{table}

\begin{table}[!ht]
    \centering
    \caption{Final hyperparameters of Gumbel-MPNN for each dataset.}

    \begin{adjustbox}{width=\textwidth}
    \begin{tabular}{l|rrrrrrrrrrrr}
    \toprule
         &  Cora & CiteSeer & PubMed & OGBn-ArXiv & Squirrel & Chameleon & Actor & Roman-Empire & Questions & Minesweeper & Tolokers & Amazon-ratings\\
     \midrule
     Hidden dim  & $1024$ & $256$ & $128$ & $64$ & $32$ & $256$ & $128$ & $64$ & $128$ & $32$ & $128$ & $320$\\
     Layers  & $2$ & $1$ & $3$ & $5$ & $5$ & $5$ & $2$ & $2$ & $2$ & $5$  & $3$ & $2$\\
     Dropout  & $0.8$ & $0.7$ & $0.9$ & $0.0$ & $0.3$ & $0.0$ & $0.4$ & $0.2$ & $0.0$ & $0.0$ & $0.0$ & $0.4$\\
     Reg. weight. & $0.05$ & $0.001$ & $0.001$ & $0.1$ & $0.05$ & $0.5$ & $0.01$ & $0.01$ & $0.05$ &$ 0.01$ & $0.0$\\
     Layernorm & yes & yes & yes & yes & yes & no & no & no & no & yes  & yes & yes\\
     Residual  & no  & yes & yes & yes & yes & no & yes & yes & yes & yes  & yes & yes\\
    Undirected  & yes & yes & yes & yes & yes & yes & yes & no & yes & yes  & yes & no\\
     \bottomrule
    \end{tabular}
    \end{adjustbox}
    \label{tab:hp_gumbel}
\end{table}


\section{Notation}
\label{app:notation}

The notation and varialbles used in this paper are summarized in Table~\ref{tab:notation}.

\begin{table}[!ht]
    \centering
    \begin{tabular}{ll}
    \toprule
       Variable  & Definition \\
    \midrule
     $\gG$    & Graph consisting of nodes and edges \\
     $\gV$    & All nodes of the graph $G$ \\
     $\gV_c$ & All nodes that belong to class $c$ \\
     $\gV_{train}$  & Labeled subset of the nodes used for training\\
     $\gV_{trans}$ &  Unlabeled subset of the nodes\\
     $\gE$    & All edges of the graph $G$ \\
     $\gA(\gG)$ & The adjacency matrix of graph $\gG$ \\
     $\gC$ & Set of all classes \\
     $c$ & A specific class $c \in \gC$ \\
     $\mX$ & Feature matrix containing the feature vectors for all nodes \\
     $\gF_c$ & The feature distribution of class $c$\\
     $D_c$ & The neighborhood-mixture distribution of class $c$ \\
     $D_l$ & A component of a neighborhood distribution \\
     $\hat{p}_{N(v_i)}$ & Empirical 1-hop distribution of node $i$ \\
     $\pi_i$ & The weight for the categorical distribution $i$ in the mixture model \\
     $k_c$ & The number of components of the neighborhood distribution of class $c$ \\
     $f$ & The downstream model to classify the nodes \\
     $w$ & Trainable weights of the model $f$ \\
     $g$ & Edge model to predict the parameters of the edge distribution \\
     $u$ & Trainable weights of the model $g$ \\
     $v_i$ & A single nodes of $v_i \in \gV $ \\
     $x_i$ & The feature vector associated with node $v_i$\\
     $y_i$ & The label associated with node $v_i$\\
     $x$ & Feature vector associated with a node $v$ \\

    \bottomrule
    \end{tabular}
    \caption{Variables used throughout the paper along with their meaning.}
    \label{tab:notation}
\end{table}

\end{document}